\def\onmrk{\ding{53}}
\def\rqmrk{\ding{54}}
\newcommand{\constraint}[1]{\mbox{\sc #1}\xspace}
\newcommand{\algo}[1]{\mbox{\textbf{#1}}\xspace}
\newcommand{\instance}[1]{\mbox{\texttt{#1}}\xspace}
\newcommand{\bfr}{\constraint{BufferedResource}}
\newcommand{\switch}{\constraint{Switch}}
\newcommand{\switchp}{\constraint{Switch$^{+}$}}
\newcommand{\alldiff}{\constraint{AllDifferent}}
\newcommand{\texmath}[1]{\ensuremath{#1}\xspace}
\def\instsymb{I}
\def\aninstance{\texmath{\instsymb}}
\def\anotherinstance{\texmath{\instsymb'}}
\def\setsymbol{\varactivesymb}
\def\solactsymbol{\sigma}
\def\solallsymbol{\tau}
\def\countvarc{N}
\def\countvars{M}
\def\countvart{K}
\newcommand{\dom}[1]{\texmath{D(#1)}}
\newcommand{\domcol}[1]{\texmath{\widehat{D}(#1)}}
\newcommand{\setvar}[1]{\texmath{\setsymbol_{#1}}}
\def\abufseq{\texmath{\solactsymbol}}
\def\anotherbufseq{\texmath{\solactsymbol'}}
\def\analloc{\texmath{\solallsymbol}}
\newcommand{\bufseq}[1]{\texmath{\solactsymbol(#1)}}
\newcommand{\otherbufseq}[1]{\texmath{\solactsymbol'(#1)}}
\newcommand{\alloc}[1]{\texmath{\solallsymbol(#1)}}
\def\testsymb{t}
\def\ntestsymb{n}
\def\itestsymb{k}
\def\ntest{\texmath{\ntestsymb}}
\def\itest{\texmath{\itestsymb}}
\def\varallocsymb{X}
\newcommand{\varalloc}[1]{\texmath{\varallocsymb_{#1}}}
\def\nslotsymb{h}
\def\islotsymb{i}
\def\nslot{\texmath{\nslotsymb}}
\def\islot{\texmath{\islotsymb}}
\def\varnumconf{\texmath{\countvarc}}
\newcommand{\varnumswitch}[1]{\texmath{\countvars_{#1}}}
\def\unitsymb{e}
\def\nunitsymb{m}
\def\iunitsymb{j}
\def\nunit{\texmath{\nunitsymb}}
\def\iunit{\texmath{\iunitsymb}}
\def\equipsymb{V}
\def\items{\texmath{U}}
\def\someitems{\texmath{\equipsymb}}
\def\anitem{\texmath{\iunitsymb}}
\def\anotheritem{\texmath{\iunitsymb'}}
\def\afirstitem{\texmath{\iunitsymb_1}}
\def\aseconditem{\texmath{\iunitsymb_2}}
\def\varactivesymb{Y}
\newcommand{\varon}[2]{\texmath{\varactivesymb_{#2}^{#1}}}
\newcommand{\varntested}[1]{\texmath{\countvart_{#1}}}
\def\varactualsymb{Z}
\newcommand{\hason}[2]{\texmath{\varactualsymb_{#2}^{#1}}}
\def\conssymb{c}
\def\nconssymb{p} 
\def\iconssymb{\ell}
\def\ncons{\texmath{\nconssymb}}
\def\icons{\texmath{\iconssymb}}
\def\cardsymb{\kappa}
\def\bufsize{\texmath{\cardsymb}}
\def\tsizesymb{\Delta}
\newcommand{\thermalsize}[1]{\texmath{\tsizesymb_{#1}}}
\newcommand{\constequip}[1]{\texmath{\conssymb}_{#1}}
\newcommand{\thermalbound}[1]{\texmath{\cardsymb_{#1}}}
\def\numswitch{\texmath{\alpha}}
\def\numstretch{\texmath{\beta}}
\def\numopt{\texmath{\gamma}}
\newcommand{\equipof}[1]{\texmath{\testsymb}_{#1}}
\newcommand\thickbar[1]{\accentset{\rule{.5em}{.8pt}}{#1}}
\newcommand{\maxdom}[2]{\texmath{\thickbar{#1}_{#2}}}
\DeclareRobustCommand{\mindom}[2]{\texmath{\underaccent{\thickbar{}}{#1}_{#2}}}
\newcommand{\ubS}[1]{\maxdom{\setsymbol}{#1}}
\newcommand{\lbS}[1]{\mindom{\setsymbol}{#1}}
\newcommand{\ubON}[2]{\maxdom{\varactivesymb}{#1}^{#2}}
\newcommand{\lbON}[2]{\mindom{\varactivesymb}{#1}^{#2}}
\newcommand\mystretch[3]{\texmath{\langle#1,#2,#3\rangle}}
\newcommand\nextTimeInLowerBound[2]{\texmath{next^{#2}_{\in}({#1})}}
\newcommand\nextTimeNotInUpperBound[2]{\texmath{next^{#2}_{\not\in}({#1})}}
\def\betterthan{\texmath{\prec}}
\def\findsupport{\algo{FindSupport}}
\newcommand{\maxcard}[1]{\texmath{\thickbar{\cardsymb}_{#1}}}
\DeclareRobustCommand{\mincard}[1]{\texmath{\underaccent{\thickbar{}}{\cardsymb}_{#1}}}
\newcommand{\maxcardinality}[1]{\maxcard{#1}}
\newcommand{\mincardinality}[1]{\mincard{#1}}
\newcommand{\ceil}[1]{\left\lceil{#1}\right\rceil}
\newtheorem{observation}{Observation}
\def\mjo#1{\textcolor{black}{#1}}  
\def\rque#1{}  
\def\toto#1{\textcolor{black}{#1}}  
\def\mj#1{\textcolor{black}{#1}}  
\title{Constraint Programming for Planning Test Campaigns of Communications Satellites}
\author{
Emmanuel Hebrard$^1$ \and
Marie-Jos\'e Huguet$^1$ \and
Daniel Veysseire$^1$ \and
Ludivine Boche Sauvan$^{1,2}$ \and 
Bertrand Cabon$^2$
}
\institute{
$^1$LAAS-CNRS, Universit\'e de Toulouse, CNRS, INSA, Toulouse, France \\
$^2$Airbus Defence \& Space, Toulouse, France
}
\begin{document}

\maketitle

\begin{abstract}
	The payload of communications satellites must go through a series of tests to assert their ability to survive in space.
	Each test involves some equipment of the payload to be active, which has an impact on the temperature of the payload. 
	Sequencing these tests in a way that ensures the thermal stability of the payload and minimizes the overall duration of the test campaign is a very important objective for satellite manufacturers.
	 
	The problem can be decomposed in two sub-problems corresponding to two objectives:
	First, the number of distinct configurations necessary to run the tests must be minimized. 
	This can be modeled as packing the tests into configurations, and we introduce a set of implied constraints to improve the lower bound of the model.
	
	Second, tests must be sequenced so that the number of times an equipment unit has to be switched on or off is minimized. We model this aspect using the constraint \switch, where 
	a buffer with \mj{limited} capacity represents the currently active equipment units,
	and we introduce an improvement of the propagation algorithm for this constraint.
	
	\mj{We then introduce a search strategy in which we sequentially solve the sub-problems (packing and sequencing). 
	Experiments conducted on real and random instances show the respective interest of our contributions.}
\end{abstract}

\section{Introduction}
\label{sec:int}


The payload of a communications satellite is the on-board equipment that is actually relevant to the mission:
receiving, amplifying and returning the signal.
A set of tests are necessary to certify that the payload will correctly perform its mission. A test is characterized by a set of active equipment units and several thermal constraints limit the number of equipment units that can be made active simultaneously. 
The duration of the tests themselves is incompressible. However, activating some equipment takes time as each equipment unit much reach a given temperature and the temperature of the entire payload must be stabilized before tests can resume. Therefore, the total transition time depends on the order in which tests are sequenced. The goal is to sequence all the tests so that the overall duration is minimized.

Two main approaches have been previously considered. In the first approach~\cite{maillet2011constraint}, tests requiring the same subset of active equipment are packed together in the same payload configuration. 
The transition time between tests run in the same configuration is null since no equipment activation is required.
Between two configurations, however, some equipment units must be activated or deactivated and the heat of the payload must be stabilized before the next configuration, which entails important transition times. 
The objective of this approach is to minimize the number of configurations necessary for running all the tests.
Moreover, a secondary objective is to minimize the overall number of activations and deactivations since the time required to
stabilize the temperature of the payload is correlated, though not linearly, to the number of simultaneous activations. 
\mj{In practice, the transition time between two configurations is considered as a constant value sufficient to stabilize the temperature in the payload.}
Maillet et al.~\cite{maillet2011constraint} first proposed a constraint programming approach to address this problem, using backjumping and dedicated heuristics.

It is in principle possible to activate an equipment unit whilst tests on other units are being run.  
The second approach, presented in~\cite{bochesauvan:hal-01166683}, relies on this idea. An equipment activation is viewed as a task with a given duration. 
If an equipment unit is not tested during a period equal to this duration and if the thermal constraints allow it, then it can be activated 
at the beginning of this period,
and becomes available for other tests at the end.
It may be possible to do so for several or even all equipment activations, thus effectively masking the transition times.
A local search method (simulated annealing) was proposed in~\cite{bochesauvan:hal-01166683} for this second approach, that is, with ``online'' activation allowed. 
It was shown 
that this approach can reduce the overall duration of the test campaign.
However, the total number of activations can be higher in some instances. 

Finally, since the focus in \cite{maillet2011constraint} was on selecting a subset of tests to be run rather than sequencing them, a straightforward  
improvement of \mj{the first} approach was proposed in~\cite{bochesauvan:hal-01166683}:
Once every test is allocated to a configuration and the number of configurations is minimized, 
the problem of minimizing the number of activations between consecutive configurations can be seen as a Traveling Salesman Problem (TSP). As the number of configurations is typically small, this problem can be solved optimally even with relatively basic TSP methods.
\medskip

\mj{The} second approach is difficult to implement in practice as activations and deactivations can happen continuously, making the detailed thermal analysis of the process a difficult task, whereas thermal engineers only need to worry about transitions between configurations in the first approach. 
In this work, we present a study for Airbus, which currently only implements the first approach.  We propose an improved constraint programming approach to solve the problem.
%
 
Since thermal constraints limit the number of tests that can be run in the same configuration, this problem has a packing component, where tests have to be allocated to a minimum \mj{number} of configurations. 
We introduce a set of implied constraints to improve the lower bound on this objective.

%

Moreover, this packing component is 
intertwined with a sequencing component where configurations must be ordered so that the number of equipment activations is minimized.
%
%
We observe that this secondary objective 
can be easily modeled using the constraint \switch~\cite{DBLP:conf/cpaior/BessiereHMQW14}.
This constraint models a resource operating as a buffer of limited capacity and whose content can be changed, however at a cost.
%
%
Tests require equipment to be active and thermal constraints limit the number of simultaneously active equipment units, which can thus be seen as buffered. Moreover,
the number of activations, which correspond to switches in the buffer, should be minimized.
%
%
We introduce a simple improvement of the propagation algorithm for this constraint in the very common case where we have prior knowledge about the items that must be eventually buffered. 

Next, we introduce a search strategy in which we sequentially solve the two sub-problems.
We solve first the packing problem with a dedicated branching heuristic to find upper and lower bounds on this objective quickly. 
Then we solve a much simplified sequencing problem since tests are already allocated to configurations. This approach is not complete, and hence we must solve the overall problem in order to find optimal solutions. However this is made significantly easier thanks to the upper and lower bounds found in the previous phases.



Finally, we experimentally evaluate the different contributions and assess the benefit of our method with respect to the current approach in use at Airbus.

\section{Formal Background}
\label{sec:pre}


A constraint satisfaction problem (CSP) consists of a set of variables,
where each variable $\varalloc{\itest}$ has a 
finite domain of values $\dom{\varalloc{\itest}}$, and a set of constraints
specifying allowed combinations of values for subsets of
variables. 
A solution of a CSP is an assignment of
values to the variables satisfying the constraints. 

We consider both
\emph{integer} and \emph{set} variables.
A set variable \varon{}{\islot} is represented by
its lower bound $\lbS{\islot}$ which contains the required elements 
and an upper bound $\ubS{\islot}$ which contains the 
possible elements. For a finite universe $\items \subset \mathbb{N}$, we identify a set variable \varon{}{\islot} with the set of Boolean variables $\{\varon{\iunit}{\islot} \mid \iunit \in \items\}$. 
The predicates $\varon{\iunit}{\islot} = 1$ and $\iunit \in \varon{}{\islot}$ are equivalent, as are $|\varon{}{\islot}| = \cardsymb$ and $\sum_{\iunit \in \unitsymb} \varon{\iunit}{\islot} = \cardsymb$.

For two integers $a \leq b$, we denote  $[a,b]$ the set of consecutive integers $\{a,\ldots,b\}$, and use the shortcut notation $[b]$ for $[1,b]$.

We shall often denote a sequence of variables or constants $(c_{1},\ldots,c_{\ntest})$ by $c$ (where the length $\ntest$ is either recalled or clear from the context).

\subsection{The constraints \switch and \bfr}


The constraints \switch and \bfr~\cite{DBLP:conf/cpaior/BessiereHMQW14} were introduced to model a type of resource corresponding to a \emph{buffer} which must contain the \emph{items} required by some tasks when they are being processed. Such resources are limited in two ways: first, 
the buffer can only hold a limited number of items,
and second, there is an upper bound on the number of item \emph{switches} along the sequence.

\mj{In our context,} these constraints are useful to model thermal constraints together with the objective to minimize the number of activations (switches) of equipment units (items to be buffered). 
The constraint \switch involves a sequence of set variables $\varon{}{} = (\varon{}{1},\ldots,\varon{}{\ntest})$ and an integer variable $\varnumswitch{}$. The set variable $\varon{}{\islot}$ represents the content of the buffer at position (or time) $\islot$, and the variable $\varnumswitch{}$ represents the total number of items that are removed from the buffer to make room for new items: $\sum_{\islot=1}^{\ntest-1} |\varon{}{\islot} \setminus \varon{}{\islot+1}|$.
Moreover, the buffer has a minimum and maximum capacity (lower and upper bound on $|\varon{}{\islot}|$) which are allowed to be different at every position $\islot$. Let $\setvar{}$ be a sequence of set variables, and $\mincardinality{},\maxcardinality{}$ be two sequences of constants of same size $\ntest$.

\begin{definition}[\switch]\label{def:switch}
  \begin{eqnarray*}
	\switch(\setvar{},\mincardinality{},\maxcardinality{},\varnumswitch{}) \iff
  \forall \islot \in [\ntest],~ \mincardinality{\islot} \leq |\setvar{\islot}| \leq \maxcardinality{\islot} ~\wedge~ \sum_{1 \leq \islot < \ntest}|\setvar{\islot+1} \setminus \setvar{\islot}| \leq \varnumswitch{}
	\end{eqnarray*}
\end{definition}

Often, 
we know beforehand which tasks are to be performed and which items are required by each task. In this case, one can use the \bfr constraint which involves a sequence of integer variables $\varalloc{} = (\varalloc{1},\ldots,\varalloc{\ntest})$ representing a permutation of $\ntest$ tasks, and 
a sequence $\equipof{} = (\equipof{1},\ldots,\equipof{\ntest})$ of sets of integers
standing for the items required by each task.
Achieving arc consistency on this constraint is NP-hard, and there is no dedicated propagation algorithm for this constraint, besides the obvious decomposition using \switch, \alldiff~\cite{regin1} and some \constraint{Element}~\cite{Hentenryck88} constraints. 


\begin{definition}[\bfr]\label{def:bfr}
	\begin{eqnarray*}
  \bfr(\varalloc{},\setvar{},\mincardinality{},\maxcardinality{},\equipof{},\varnumswitch{}) & \iff \\
	\switch(\setvar{},\mincardinality{},\maxcardinality{}, \varnumswitch{}) & ~\wedge~ \\
	\forall \islot < j \in [1,\nslot],~ \varalloc{\islot} \neq \varalloc{j} & ~\wedge~ \\
	\forall \islot,~ \equipof{\islot} \subseteq \setvar{\varalloc{\islot}} &
\end{eqnarray*}
\end{definition}

In our test planning problem, the buffers have equal upper and lower bounds. We shall therefore use a single integer parameter to denote the sequences $\mincardinality{}$ and $\maxcardinality{}$ in the remainder of the paper.

\section{Test Planning}
\label{sec:pro}

\subsection{Data and Constraints}

A test campaign involves $\ntest$ tests and $\nunit$ equipment units. 
Every test $\itest \in [\ntest]$ involves a subset $\equipof{\itest} \subseteq [\nunit]$ of equipment units to be active.\footnote{Throughout the paper, an equipment unit is said to be ``active'' if it is switched on and ``inactive'' otherwise.}
A payload configuration (or simply configuration) is defined by a partition of the equipment into active and inactive units.
A test $\itest$ can occur in a configuration if the set of active equipment units in that configuration is a superset of $\equipof{\itest}$.

However, one cannot use a single configuration where every equipment unit is active, because the payload would overheat.
Equipment units that are on the same wall or blade of the satellite contribute to the overall temperature of that wall/blade. Therefore, we have $\ncons$ constraints, one for every set of equipment units whose thermal profiles are linked. 
For each thermal constraint $\icons \in [\ncons]$, we define a subset $\constequip{\icons} \subseteq [\nunit]$ of size $\thermalsize{\icons}$ of equipment units, among which exactly\footnote{Alternatively, one may only consider an upper bound only to prevent the system from overheating, however thermal engineers advise to keep the system as stable as possible, hence our choice of an equality.} $\thermalbound{\icons}$ should be active at the same time, i.e., in the same configuration. 

\subsection{Decisions and Objectives}

A test plan with $\nslot^*$ configurations is a mapping $\analloc$ from tests to a set of consecutive integers $[\nslot^*]$ 
(without loss of generality, we assume that configurations are numbered $1$ to $\nslot^*$),
and a mapping $\abufseq$ from configurations 
to subsets of equipment units such that the equipment units required to run a test are active when this test is run and every configuration satisfies all thermal constraints.
%
%
 
The main objective is to minimize the number of configurations \toto{$|\{\alloc{\itest} \mid \itest \in [\ntest]\}|$}, in order to reduce the transition time between tests, that is, the time spent in reconfiguring the payload. 

Moreover
it is important to take into account the total number of changes in the status of an equipment unit. Indeed, even though several equipment units can be switched on or off simultaneously, changing the status of more units requires a more careful analysis of the thermal dynamics of the system and is more likely to destabilize it.
The second 
objective therefore is the total number of times an equipment unit is switched on {besides} the initial activation: $\sum_{\islot=1}^{\nslot^*}(|\bufseq{\islot} \setminus \bufseq{\islot-1}|) - \nunit$, where $\bufseq{0}$ is assumed to be empty. 

Indeed, it is important to take into account the total number of changes in the status of the equipment. Even though several equipment units can be switched on or off simultaneously, changing the status of more units requires a more careful analysis of the thermal dynamics of the system and is more likely to destabilize it.
In order to count the total number of times each equipment unit is switched from inactive to active and vice versa, we must decide the order in which the planned configurations will be visited. By convention, since configuration names are arbitrary, the tests allocated to configuration $\islot$ are run at the $\islot$-th position. Therefore, the same mapping $\analloc$ defines both the allocation of tests to configurations and the sequence in which tests will be run.

%

We consider here that the packing objective has higher priority than the sequencing objective and thus that they are lexicographically ordered.
%
%
%

\medskip

\begin{example}
Consider a set of $8$ tests and $6$ equipment units shown in Figure~\ref{sol:lex}.
The equipment required by each test is indicated with the symbol \rqmrk.
Moreover, assume that we have two thermal constraints with scopes $\constequip{1} = \{1,2,3\}$ and $\constequip{2} = \{4,5,6\}$ both of capacity $2$.


The solution shown in Figure~\ref{sol:lex} (\onmrk\ symbols indicate active equipment not involved in the current test) is suboptimal as it requires three configurations. Additionally, equipment units 1 and 6 must be activated twice.

However, with the permutation $2, 3, 5, 7, 8, 1, 4, 6$ shown in Figure~\ref{sol:opt}, we only need two configurations and every equipment unit is activated exactly once.

\begin{figure}
	\begin{center}
		
		\subfloat[\label{sol:lex}]{%
		\tabcolsep=1pt
\begin{tabular}{cccc|ccc|ccc}
\multicolumn{2}{c}{conf:}& \multicolumn{2}{c}{1}& \multicolumn{3}{c}{2}& \multicolumn{3}{c}{3}\\
\multicolumn{2}{c}{test:}& 1& 2& 3& 4& 5& 6& 7& 8\\
\hline
\multirow{6}{*}{\rotatebox[origin=c]{90}{equipment}}&${1}$ & \rqmrk& \onmrk& & & & \rqmrk& \onmrk& \onmrk\\
&${2}$ & \onmrk& \rqmrk& \rqmrk& \onmrk& \rqmrk& & & \\
&${3}$ & & & \rqmrk& \onmrk& \onmrk& \onmrk& \rqmrk& \onmrk\\
\cline{2-10}
&${4}$ & \rqmrk& \onmrk& \onmrk& \rqmrk& \onmrk& & & \\
&${5}$ & & & \onmrk& \rqmrk& \rqmrk& \rqmrk& \onmrk& \rqmrk\\
&${6}$ & \onmrk& \rqmrk& & & & \onmrk& \rqmrk& \rqmrk
\end{tabular}
		}
		\hspace{.5cm}
		\subfloat[\label{sol:opt}]{%
		\tabcolsep=1pt
\begin{tabular}{ccccccc|ccc}
\multicolumn{2}{c}{conf:}& \multicolumn{5}{c}{1}& \multicolumn{3}{c}{2}\\
\multicolumn{2}{c}{test:}& 2& 3& 5& 7& 8& 1& 4& 6\\
\hline
\multirow{6}{*}{\rotatebox[origin=c]{90}{equipment}}&${1}$ & & & & & & \rqmrk& \onmrk& \rqmrk\\
&${2}$ & \rqmrk& \rqmrk& \rqmrk& \onmrk& \onmrk& & & \\
&${3}$ & \onmrk& \rqmrk& \onmrk& \rqmrk& \onmrk& \onmrk& \onmrk& \onmrk\\
\cline{2-10}
&${4}$ & & & & & & \rqmrk& \rqmrk& \onmrk\\
&${5}$ & \onmrk& \onmrk& \rqmrk& \onmrk& \rqmrk& \onmrk& \rqmrk& \rqmrk\\
&${6}$ & \rqmrk& \onmrk& \onmrk& \rqmrk& \rqmrk& & & 
\end{tabular}
		}

		\end{center}
	\caption{\label{fig:solutions} A solution with 3 configurations and 2 extra activations (\ref{sol:lex}), and an optimal solution with 2 configurations and no extra activation (\ref{sol:opt}).}
\end{figure}
	\end{example}

\subsection{Complexity}

It is relatively easy to see that the test planning problem described above is NP-hard. We show that the decision version \constraint{TestPlanning}, which asks whether there exists a plan with at most $\nslot$ configurations is NP-complete.

\begin{theorem}
	\constraint{TestPlanning} is NP-complete.
\end{theorem}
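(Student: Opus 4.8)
The plan is to establish membership in NP and then NP-hardness by reduction. Membership is straightforward: a plan is the pair of mappings $(\analloc,\abufseq)$ from the definition, it has size polynomial in the input, and given it as a certificate one verifies in polynomial time that at most $\nslot$ distinct configurations occur in $\analloc$, that for every test the equipment it requires is active in its configuration, and that every used configuration satisfies all $\ncons$ thermal constraints (exactly the prescribed number of active units in each linked scope). Hence \constraint{TestPlanning} is in NP.

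For hardness I would reduce from \textsc{3-Partition}: given $3r$ positive integers $a_1,\ldots,a_{3r}$ with $B/4 < a_i < B/2$ for all $i$ and $\sum_i a_i = rB$, decide whether $\{1,\ldots,3r\}$ can be partitioned into $r$ triples each of sum $B$. Because \textsc{3-Partition} is strongly NP-complete we may assume the $a_i$ are encoded in unary, which is what keeps the construction polynomial. Build the \constraint{TestPlanning} instance with $rB$ equipment units split into pairwise disjoint blocks $E_1,\ldots,E_{3r}$ with $|E_i| = a_i$; with one test per integer, where test $i$ requires exactly the units of $E_i$; with a single thermal constraint whose scope is the set of all units and whose capacity is $B$; and with the question ``is there a plan using at most $\nslot := r$ configurations?''. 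For the forward direction, a $3$-partition into triples $T_1,\ldots,T_r$ gives the plan that assigns the three tests of $T_j$ to configuration $j$ with active set $\bigcup_{i \in T_j} E_i$; this union is disjoint of size $\sum_{i \in T_j} a_i = B$, so the thermal constraint holds, all requirements are met, and only $r$ configurations are used. Conversely, in any valid plan every used configuration has exactly $B$ active units, so disjointness of the blocks forces the integers of the tests sharing a configuration to sum to at most $B$; summing over the at most $r$ configurations gives $rB = \sum_i a_i \leq rB$, so there are exactly $r$ configurations and in each the integers sum to exactly $B$; the bounds $B/4 < a_i < B/2$ then force exactly three tests per configuration, i.e.\ the plan yields a $3$-partition.

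I do not expect a genuine obstacle here; it is a textbook-style reduction, and the only two points needing attention are that the thermal constraints are equalities (a prescribed number of active units per scope, not merely an upper bound), which is why I pick a source problem that makes the active-set size hit the capacity exactly, and that the source must be \emph{strongly} NP-complete so the unary size $rB$ stays polynomial. As a cross-check, a different reduction from graph $k$-colorability also works and avoids any encoding concern: use one test per vertex and, for each edge $\{u,v\}$, two fresh units $e^u,e^v$ together with a capacity-$1$ thermal constraint on $\{e^u,e^v\}$, letting the test for $v$ require $e^v$ for every edge incident to $v$. Then a set of tests is feasible in a common configuration exactly when it is an independent set, so a plan with at most $\nslot$ configurations exists iff the graph is $\nslot$-colorable; this variant uses $|E|$ thermal constraints instead of one.
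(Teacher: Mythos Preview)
Your proof is correct. The membership argument matches the paper's, and both of your hardness reductions go through. Interestingly, your ``cross-check'' reduction from graph colorability is precisely the paper's own proof: one test per vertex, two fresh units and a capacity-$1$ thermal constraint per edge, so that configurations correspond to independent sets and a plan with $\nslot$ configurations exists iff the graph is $\nslot$-colorable.

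Your primary route via \textsc{3-Partition} is a genuinely different argument with a different trade-off. The paper's coloring reduction proves hardness already for the fixed bound $\nslot=3$, but at the price of $|E|$ thermal constraints; your \textsc{3-Partition} reduction needs $\nslot=r$ to vary with the input, but establishes hardness even when there is a \emph{single} thermal constraint over all equipment. The latter nicely complements the paper's later remark that the single-constraint case is already hard via \bfr. Your handling of the equality form of the thermal constraint (exactly $\thermalbound{\icons}$ active units, not merely at most) is clean: in the forward direction the triple's blocks fill the capacity exactly, and in the backward direction the disjoint-block counting argument only uses the upper-bound side, so the equality causes no trouble. The appeal to strong NP-completeness to justify the unary blow-up to $rB$ units is the right technical point.
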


\begin{proof}
	It is in NP since a plan can be checked in polynomial time.
	
	To prove hardness, we use a straightforward reduction from \constraint{3-coloring}, which asks, given a graph $G=(V,E)$, whether there exists a coloring of $V$ with at most 3 colors such that no edge has its two end points of the same color.
	
	From a graph $G=(V,E)$, we build an instance of \constraint{TestPlanning} as follows:
	\begin{itemize}[itemsep=0pt]
		\item For every edge $(x,y) \in E$, there are two equipment units $x_y$ and $y_x$ and a thermal constraint 
		on these two units with capacity $1$.
		\item For every vertex $x \in V$ we create a test $t_x$ involving equipment units $x_y$ for every $y$ such that $(x,y) \in E$.
	\end{itemize}
	
	It is easy to see that 
	two tests $t_x$ and $t_y$ can share the same configuration if and only if there is no edge $(x,y) \in E$. Therefore, $G$ has a 3-coloring if and only if there is a plan with at most 3 configurations, and hence \constraint{TestPlanning} is NP-hard for $\nslot=3$.
\qed
\end{proof}

Moreover, even if we let the number of configurations free, minimizing the number of switches is also NP-hard for a single thermal constraint over all equipment of capacity $\thermalbound{}$ since it corresponds exactly to the constraint:
\[
\bfr(\varalloc{},\setvar{},\thermalbound{},\equipof{},\varnumswitch{})
\] 
where $\varalloc{}$ (resp. $\setvar{}$) is a sequence of $\ntest$ integer (resp. set) variables, and for every $\itest \in [\ntest]$ the variable $\varalloc{\itest}$ has domain $[\ntest]$ and the variable $\setvar{\itest}$ ranges between the empty set and $[\nunit]$.

The number of simultaneously active equipment units must be equal to $\thermalbound{\icons}$, therefore, we can consider actived equipment as a buffer of capacity exactly $\thermalbound{\icons}$. Moreover, the set of equipment units $\equipof{\itest}$ required by a test $\itest$ corresponds to the items required to be in the buffer when processing this task. Finally, the objective is to minimize the number activations which is equivalent, up to a constant, to the number of switches $\varnumswitch{}$. The reduction from Hamiltonian path to \bfr\ provided in~\cite{DBLP:conf/cpaior/BessiereHMQW14} can be lifted to this particular case.

%
%
%
%
%



\section{A Constraint Model for Test Planning}
\label{sec:mod}


Let $\nslot \leq \ntest$ be some known upper bound on the number of configurations. 
We use $\ntest$ \emph{allocation} variables $\{\varalloc{\itest} \mid \itest \in [\ntest]\}$ with domain $[\nslot]$ standing for the configuration allocated to each test.
Unless we have a valid upper bound, the maximum number of configurations $\nslot$ is equal to $\ntest$.
Next, we use $\nunit \times \nslot$ \emph{activity} variables $\{\varon{\iunit}{\islot} \mid \iunit \in [\nunit],~ \islot \in [\nslot]\}$ standing for the status of equipment unit $\iunit$ in configuration $\islot$, i.e., $\varon{\iunit}{\islot}$ is equal to 1 if unit $\iunit$ in switched ON in $\islot$ and is equal to 0 otherwise. Moreover, we denote $\varon{\items}{\islot}$ the set variable with characteristic function $\{\varon{\iunit}{\islot} \mid \iunit \in \items\}$. For instance, $\varon{[\nunit]}{\islot}$ has as characteristic function
the set of variables in
the column $\islot$ of the $\nunit \times \nslot$ matrix formed by the activity variables, i.e., the set of equipment units active in configuration $\islot$.
We introduce a variable $\varnumswitch{\icons}$ for each constraint $\icons$ standing for the total number of switches on the equipment units $\constequip{\icons}$.
Finally,
we have two variables to express the objective function: a variable $\varnumconf$ standing for the number of configurations and a variable $\varnumswitch{}$ for the total number of switches. 

\mj{To model the test planning problem, we then} post the following constraints:

\begin{eqnarray}
\label{con:channel} \forall \itest \in [\ntest],~ &	\equipof{\itest} \subseteq \varon{[\nunit]}{\varalloc{\itest}} \\
\label{con:thermal} \forall \icons \in [\ncons],~ \forall \islot \in [\nslot],~ & |\varon{\constequip{\icons}}{\islot}| = \thermalbound{\icons} \\
\label{con:confobj} \forall \itest \in [\ntest],~ & \varnumconf \geq \varalloc{\itest} \\
\label{con:switch} \forall \icons \in [\ncons],~  & \switch(\varon{\constequip{\icons}}{}, \thermalbound{\icons}, \varnumswitch{\icons}) + \thermalbound{\icons} - \thermalsize{\icons} \\
\label{con:switchobj} & \varnumswitch{} = \sum_{\icons=1}^{\ncons} \varnumswitch{\icons}  
\end{eqnarray}

Constraints~(\ref{con:channel}) channel the allocation variables and the activity variables to ensure that every equipment unit required by a test is active in the slot in which the test is ran. They are implemented as \constraint{Element} constraints.

Constraints~(\ref{con:thermal}) enforce the thermal constraints on the number of equipment units active simultaneously within given subsets. They are implemented as simple \constraint{Sum} constraints.

Constraints~(\ref{con:confobj}) ensure that the variable representing the number of configurations is greater than or equal than the maximum allocated slot. We do not use a \constraint{Maximum} constraint nor an equality as we minimize this criterion.

Constraints~(\ref{con:switch}) count the number of switches in the sequence of set variables $\varon{\constequip{\icons}}{} = (\varon{\constequip{\icons}}{1},\ldots,\varon{\constequip{\icons}}{\nslot})$ for every thermal constraint $\icons$. 
The \switch constraint standing for the thermal constraint $\icons$ ensures that the number of switches $\sum_{\islot=1}^{\nslot} |\varon{\constequip{\icons}}{\islot} \setminus \varon{\constequip{\icons}}{\islot+1}| = \varnumswitch{\icons}$, for a capacity $\thermalbound{\icons}$ of the buffer.
The constant term $\thermalbound{\icons} - \thermalsize{\icons}$ is used to count only from the \emph{second} activation of each equipment unit.
The constraint \switch counts the number of deactivations. Moreover, the \thermalbound{\icons} items contained in the buffer at the last position are not counted by \switch, even though they will eventually be switched off. The total number of deactivations is thus $\varnumswitch{\icons} + \thermalbound{\icons}$ and it is equal to the number of activations.
Then each of the \thermalsize{\icons} equipment units constrained by $\icons$ must be activated at least once, so we can subtract this number to obtain the number of activation besides the first.

Finally, Constraint~(\ref{con:switchobj}) computes the overall sum of switches. 

\medskip

We consider the number of configurations as a higher priority objective than the total number of switches. Therefore, we express the objective function to minimize as the weighted sum $(\nunit\nslot/2)\varnumconf + \varnumswitch{}$.
\medskip

\mj{The Test Planning problem may be decomposed into two sub-problems:
\begin{itemize}[itemsep=0pt]
	\item a \textit{Test Packing problem} when restricting to Constraints~(\ref{con:channel}), (\ref{con:thermal}) and (\ref{con:confobj}) and the minimization of the number of configurations $\varnumconf$;
	\item a \textit{Test Sequencing problem} when restricting to Constraints~(\ref{con:channel}), (\ref{con:switch}) and (\ref{con:switchobj}) and the minimization of the total number of switches $\varnumswitch{}$.
\end{itemize}
}

For the type of satellites we have considered in this study, there is no overlap in the scope of thermal constraints since 
they stand for
the physical support of a disjoint subset of equipment units. However, overlaps may exist in some satellite architectures. In this case, we need to replace the constraints~(\ref{con:switch}) and (\ref{con:switchobj}) by a single \switch constraint on the sequence of set variables $\varon{\nunit}{} = (\varon{[\nunit]}{1},\ldots,\varon{[\nunit]}{\nslot})$:

\begin{eqnarray}
\label{con:singleswitch} \switch(\varon{[\nunit]}{}, \sum_{\icons=1}^{\ncons}\thermalbound{\icons}, \varnumswitch{})
\end{eqnarray}

\section{Lower Bound for Test Packing}
\label{sec:pac}

The problem restricted to Constraints~(\ref{con:channel}), (\ref{con:thermal}) and (\ref{con:confobj}) has some similarities with the multi-dimensional bin packing problem~\cite{Chekuri:1999:MPP:314500.314555}. 
Instead of a one dimensional capacity, the bins have a capacity $\thermalbound{\icons}$ for each of the $\ncons$ dimension/thermal constraint.
Moreover, for a test $\itest$ requiring a set of equipment units $\equipof{\itest}$, we can compute a $\ncons$-vector representing its weight in each of these dimensions.
However, the weights are not additive since every equipment unit is activated at most once per configuration, irrespective of the number of tests requiring it in this configuration. As discussed previously, it can also be seen as a coloring problem. 
The particular case where each test is on a single equipment unit could be seen as a generalization of list coloring to hypergraphs, as thermal constraints can be mapped to hyperedges. However, to our knowledge, there is no known method for this specific problem. 

\medskip

\newcommand{\neighbor}[1]{\texmath{\Gamma_{#1}}}
\newcommand{\lbNtest}[1]{\texmath{\underaccent{\thickbar{}}{\Gamma}_{#1}}}
We therefore use a simple lower bound on the number of configurations based on the ``capacity'' of the thermal constraints.
Indeed, if a thermal constraint $\icons$ ensures that at most $\thermalbound{\icons}$ equipment units are active in a given configuration, and if the tests involve $\thermalsize{\icons}$ units, then at least
$\ceil{\thermalsize{\icons}/ \thermalbound{\icons}}$ configurations are needed. More generally, suppose that we have a lower bound $\varntested{\iunit}$ on the number of configurations in which an equipment unit $\iunit$ will be active. 
Given a thermal constraint $\icons$, the sum of these lower bounds over all equipment units in $\constequip{\icons}$ cannot be greater than the number of configurations multiplied by the capacity $\thermalbound{\icons}$, 
in other words, for any constraint $\icons$,  $\ceil{(\sum_{\iunit \in \constequip{\icons}}\varntested{\iunit})/\thermalbound{\icons}}$ is a valid lower bound for $\varnumconf$.

Now, consider an equipment unit $\iunit$, and let $\neighbor{\iunit} = \{ \iunit' \mid \exists \itest ~s.t.~ \{\iunit,\iunit'\} \subseteq \equipof{\itest}\}$ be its ``neighborhood'', that is, 
the set of units that will necessarily be active when running the tests requiring $\iunit$.
It might not be possible to activate all these equipment units within a single configuration because of the thermal constraints. 
More generally, the equipment unit $\iunit$ must be active in at least as many configurations as are required to visit $\neighbor{\iunit}$, i.e., $\varntested{\iunit} \geq \max(\{\ceil{\frac{|\neighbor{\iunit} \cap \constequip{\icons}|}{\thermalbound{\icons}}} \mid \icons \in [\ncons]\})$, let $\lbNtest{\iunit}$ be this lower bound.

 The following two implied constraints enforce this lower bound:

\begin{eqnarray}
\label{con:lbnumtest}\forall \iunit \in [\nunit],~ & \varntested{\iunit} = \max(\lbNtest{\iunit}, \sum_{i=1}^{\nslot}{\varon{\iunit}{\islot}}) \\
\label{con:sumnumtest}\forall \icons \in [\ncons],~ & \varnumconf \geq \ceil{\frac{\sum_{\iunit \in \constequip{\icons}}\varntested{\iunit}}{\thermalbound{\icons}}} 
\end{eqnarray}

However, Constraints~\ref{con:thermal} in the base model force every set variable $\varon{\constequip{\icons}}{\islot}$ to have cardinality $\thermalbound{\icons}$, even if no test is allocated to configuration $\islot$. In this case, the configuration $\islot$ would be a copy of configuration $\varnumconf$ since it satisfies every constraint and minimize the number of switches.
However, this is not compatible with Constraint~\ref{con:lbnumtest} since equipment units active in ``dummy'' configurations would still be counted.
We therefore replace Constraints~\ref{con:thermal} with the following constraints on $\ncons\nslot$ extra Boolean variables $\{\hason{\icons}{\islot} \mid \islot \in [\nslot], \icons \in [\ncons]\}$:

\begin{eqnarray}
\label{con:channelhason} \forall \icons \in [\ncons],~ \forall \islot \in [\nslot],~ & \hason{\icons}{\islot} \iff \sum_{\iunit \in \constequip{\icons}} \varon{\iunit}{\islot} > 1 \\
\label{con:thermalconf} \forall \icons \in [\ncons],~ \forall \islot \in [\nslot],~ & \sum_{\iunit \in \constequip{\icons}} \varon{\iunit}{\islot} = \hason{\icons}{\islot} \times \thermalbound{\icons} \\
\label{con:channelnconf} \forall \icons \in [\ncons],~ \forall \islot \in [\nslot],~ & \varnumconf \geq \islot \iff \hason{\icons}{\islot}
\end{eqnarray}

Constraints~\ref{con:channelhason} ensure that \hason{\icons}{\islot} equals 1 iff at least one equipment unit constrained by $\icons$ is active in configuration
$\islot$. Constraints~\ref{con:thermalconf} ensure that, for every configuration $\islot$, either there is no active equipment ($\islot$ is a dummy configuration) or
exactly $\thermalbound{\icons}$ for every thermal constraint $\icons$.
Finally, Constraints~\ref{con:channelnconf} channel these extra variables with the objective variable $\varnumconf$ by ensuring that $(\hason{\icons}{1}, \ldots, \hason{\icons}{\nslot})$ is its unary \emph{order} encoding~\cite{Tamura2008}.
%
These constraints are necessary to obtain a filtering 
as strong as in the base model, while improving the lower bound on $\varnumconf$.

\section{Test Sequencing}
\label{sec:seq}
\rque{cette section est rude .... pas tout suivi ... je dois encore relire le tout. D
ans la partie branching, tu parles ensuite de la contrainte BufferedResource, est-ce qu'un petit topo sur ces 2 contraintes ne serait pas pertinent ? }

The problem of minimizing the number of equipment activations can be naturally represented using the \switch constraint, since, as shown in Section~\ref{sec:mod}, thermal constraints and the fact that tests require some equipment units to be active can be modeled as buffered resources.
Moreover, as often in problems that can be represented with a buffered resource, we know beforehand the set of items (here equipment units) that will be activated at least once. In other words, the total number of items to be buffered minus the capacity $\thermalbound{\icons}$ of the buffer is a trivial lower bound on the required number of switches. However, the constraint \switch does not take this information into account and hence is ``suboptimal'', especially when only a few tests have been allocated a configuration.

In this section, we propose an improvement of the propagator for \switch for the very common case where we have prior knowledge on a set of items that must eventually be buffered. 
In some cases, we can simply add the number of non-buffered items to the lower bound computed by the algorithm for \switch. However, this is not always true. We define a correct lower bound based on this idea in Theorem~\ref{the:swlb}.


 We define the variant \switchp of the \switch constraint with an extra parameter $\someitems$ indicating the items that must be put in the buffer at some point in the sequence.
   Let $\someitems$ be a set of integers, $\varnumswitch{}$ an integer variable, $\setvar{} = (\setvar{1},\ldots,\setvar{\ntest})$ be a sequence of set variables, and $\mincard{},\maxcard{}$ be two sequences of $\ntest$ integers.
\begin{definition}
  \begin{eqnarray*}
	\switchp(\setvar{},\mincardinality{},\maxcardinality{}, \someitems, \varnumswitch{}) & \iff \\
  \forall \islot \in [\ntest],~ \mincardinality{\islot} \leq |\setvar{\islot}| \leq \maxcardinality{\islot} ~\wedge~ \sum_{1 \leq \islot < \ntest}|\setvar{\islot+1} \setminus \setvar{\islot}| \leq \varnumswitch{} ~\wedge~ \bigcup_{\islot=1}^{\ntest}\setvar{\islot} = \someitems &
	\end{eqnarray*}
\end{definition}

We next recall some background about the constraint and in particular its propagation algorithm in Definitions~\ref{def:nxt} and \ref{def:pri}. 
It is possible to find an optimal buffer sequence $\abufseq$, that is, an assignment of 
$\setvar{}$
that minimizes the value of $\varnumswitch{}$ 
with the greedy procedure \findsupport\ introduced by~\cite{DBLP:conf/cpaior/BessiereHMQW14} (Algorithm 1 in~\cite{DBLP:conf/cpaior/BessiereHMQW14}).
This algorithm explores the sequence once while maintaining all items in a list ordered by a priority relation $\betterthan$ based on two indices ($\nextTimeInLowerBound{\anitem}{\islot}$ and  $\nextTimeNotInUpperBound{\iunit}{\islot}$) for each item $\anitem$.

\begin{definition}
	\label{def:nxt}
	$\nextTimeInLowerBound{\anitem}{\islot}$ is the least index $\islot' \geq \islot$ such that
	$\anitem \in \lbS{\islot'}$ if it exists and $\ntest+2$ otherwise.
	$\nextTimeNotInUpperBound{\iunit}{\islot}$ is the least index $\islot' \geq \islot$ such that
	$\anitem \not\in \ubS{\islot'}$ if it exists and $\ntest+1$ otherwise.
\end{definition}

The priority relation \betterthan between two items is defined by the following criteria: 
\begin{definition}
	\label{def:pri}
At index $\islot$, and
given two items $\afirstitem < \aseconditem$, we have $\afirstitem \betterthan \aseconditem$ if:
\begin{enumerate}
	
	\item $\nextTimeInLowerBound{\afirstitem}{\islot} < \nextTimeNotInUpperBound{\afirstitem}{\islot}$ and $\nextTimeInLowerBound{\afirstitem}{\islot} \leq \nextTimeInLowerBound{\aseconditem}{\islot}$, or
	
	\item $\nextTimeInLowerBound{\aseconditem}{\islot} > \nextTimeNotInUpperBound{\aseconditem}{\islot}$ and $\nextTimeNotInUpperBound{\afirstitem}{\islot} \geq \nextTimeNotInUpperBound{\aseconditem}{\islot}$

\end{enumerate}

and $\aseconditem \betterthan \afirstitem$ otherwise.
\end{definition}
	
	\medskip
	
The procedure starts with $\bufseq{0} = \emptyset$ and when moving to index $\islot$, it first adds all required items (i.e., in $\lbS{{\islot}}$)
and removes all impossible items 
(i.e., not in $\ubS{{\islot}}$), 
which yields the set $\bufseq{\islot} = \ubS{\islot} \cap \bufseq{\islot-1} \cup \lbS{\islot}$.
If $|\bufseq{\islot}| > \maxcard{\islot}$ it then removes the $\maxcard{\islot} - |\bufseq{\islot}|$ last items for the order $\betterthan$ in $\bufseq{\islot}$. Otherwise, if $|\bufseq{\islot}| < \mincard{\islot}$ it adds the $\mincard{\islot} - |\bufseq{\islot}|$ first items for the order $\betterthan$ in $\ubS{\islot} \setminus \bufseq{\islot}$.

%


\begin{definition}
A stretch of a buffer sequence $\abufseq$ is a triple $\mystretch{\anitem}{a}{b}$ such that the value $\anitem$ is buffered in $\abufseq$ during the interval $[a,b]$ and $\anitem$ is not buffered at $a-1$ nor at $b+1$ (we assume that no item is buffered at 0 or $\ntest+1$).
We say that a stretch $\mystretch{\anitem}{a}{b}$ is \emph{optional} if there is no $\islot \in [a,b]$ such that $\anitem \in \lbS{\islot}$.
We say that the item $\anitem$ is optional if there exists a stretch $\mystretch{\anitem}{a}{b}$ and for all $\islot \in [\ntest]$, $\anitem \not\in \lbS{\islot}$.
\end{definition}

Every stretch entails one switch, except if it extends until the end of the sequence. Therefore, 
if $\bufsize$ is the cardinality of the buffer at the last index of the sequence, the following observation holds:

\begin{observation}
A solution with $\numswitch$ switches has $\bufsize+\numswitch$ stretches.
\end{observation}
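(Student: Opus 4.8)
The plan is to establish the equality by counting the stretches of an arbitrary buffer sequence $\abufseq$ according to the position at which they \emph{end}. First I would fix the convention already adopted in the definition of a stretch, namely $\bufseq{0} = \bufseq{\ntest+1} = \emptyset$, so that a stretch $\mystretch{\anitem}{a}{b}$ is exactly a maximal interval $[a,b] \subseteq [\ntest]$ on which $\anitem$ is buffered. In particular the stretches of a fixed item are pairwise disjoint, and every position $\islot$ with $\anitem \in \bufseq{\islot}$ lies in exactly one stretch of $\anitem$; hence each stretch has a well-defined end index $b \in [\ntest]$, and the total number of stretches equals the number of (stretch, end index) pairs. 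I would then split these pairs into two disjoint classes according to whether $b = \ntest$ or $b < \ntest$.

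For the class $b = \ntest$: a stretch of the form $\mystretch{\anitem}{a}{\ntest}$ exists if and only if $\anitem \in \bufseq{\ntest}$, and by disjointness of an item's stretches it is then unique. So this class contains exactly $|\bufseq{\ntest}| = \bufsize$ stretches. For the class $b < \ntest$: a stretch $\mystretch{\anitem}{a}{b}$ with $b < \ntest$ forces $\anitem \in \bufseq{b}$ and $\anitem \notin \bufseq{b+1}$, i.e.\ $\anitem \in \bufseq{b} \setminus \bufseq{b+1}$ --- this is the single switch ``charged'' to that stretch. Conversely, given any pair $(\anitem, b)$ with $1 \le b < \ntest$ and $\anitem \in \bufseq{b}\setminus\bufseq{b+1}$, the position $b$ lies in a unique stretch of $\anitem$, and since $\anitem\notin\bufseq{b+1}$ this stretch ends exactly at $b$. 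Thus the assignment ``stretch ending at $b<\ntest$'' $\mapsto (\anitem, b)$ is a bijection onto $\{(\anitem,b) \mid 1\le b<\ntest,~\anitem\in\bufseq{b}\setminus\bufseq{b+1}\}$, a set of cardinality $\sum_{1 \le \islot < \ntest} |\bufseq{\islot}\setminus\bufseq{\islot+1}| = \numswitch$, the number of switches of the solution. Summing the two disjoint classes yields $\bufsize + \numswitch$ stretches.

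The only step needing a little care is the bijection for the class $b<\ntest$: one must check that every ``removal event'' $\anitem \in \bufseq{b}\setminus\bufseq{b+1}$ is witnessed by exactly one stretch, which is precisely where the maximality and mutual disjointness of the stretches of a fixed item are used. Everything else is routine bookkeeping, and the argument is really just a formalisation of the remark preceding the observation that every stretch entails one switch except when it runs to the end of the sequence.
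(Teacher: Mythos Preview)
Your proof is correct and is precisely a formalisation of the paper's own justification. The paper does not give a separate proof for this observation; it merely states, immediately before it, that ``every stretch entails one switch, except if it extends until the end of the sequence'' --- which is exactly the split into the two classes $b=\ntest$ and $b<\ntest$ that you set up, together with your bijection between stretches ending before $\ntest$ and removal events.
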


\begin{lemma}
	\label{lem:singlechange}
	If $\abufseq$ and $\abufseq'$ are two buffer sequences found by \findsupport on two instances $\aninstance$ and $\anotherinstance$ equal on every index except one for which the lower bound of the buffer at this index contains a single additional item $\anitem$ in $\anotherinstance$,
	then 
	$\forall \islot \in [\ntest],~ \otherbufseq{\islot} \setminus \bufseq{\islot} \subseteq \{\anitem\}$.
\end{lemma}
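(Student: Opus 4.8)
The plan is to run \findsupport\ on $\aninstance$ and on $\anotherinstance$ in parallel and to prove, by induction on the index $\islot$, that their partial buffers stay within ``distance one'' of each other, the only element $\anotherinstance$ ever carries in excess being $\anitem$. Write $s$ for the unique index at which the two instances differ, so that the lower bound of the buffer at $s$ is $\lbS{s}$ under $\aninstance$ and $\lbS{s}\cup\{\anitem\}$ under $\anotherinstance$, while all upper bounds, all other lower bounds, and all capacities coincide. First I would isolate the only effect of this modification on the data used by the greedy procedure: in Definition~\ref{def:nxt}, every $\nextTimeNotInUpperBound{\cdot}{\cdot}$ is untouched (the upper bounds do not change), and so is $\nextTimeInLowerBound{\anotheritem}{\cdot}$ for every item $\anotheritem\neq\anitem$; only $\nextTimeInLowerBound{\anitem}{\islot}$ changes, becoming $\min(\nextTimeInLowerBound{\anitem}{\islot},s)$ (the left-hand value taken in $\aninstance$) when $\islot\leq s$ and remaining unchanged when $\islot>s$. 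In particular it only decreases.

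The second preliminary step is a monotonicity property of the priority relation of Definition~\ref{def:pri}: when $\nextTimeInLowerBound{\anitem}{\islot}$ decreases, the relation ``$\anitem\betterthan\anotheritem$'' becomes (weakly) easier for every other item $\anotheritem$ and ``$\anotheritem\betterthan\anitem$'' becomes (weakly) harder, whereas the relative order of any two items both distinct from $\anitem$ is unchanged. This I would check by inspecting the two disjunctive conditions of Definition~\ref{def:pri} and noting that each is monotone in $\nextTimeInLowerBound{\anitem}{\islot}$ in the required direction, treating separately the cases where $\anitem$ is the smaller or the larger of the two compared labels. The upshot is that, at each index, the order $\betterthan$ used by $\anotherinstance$ is obtained from the one used by $\aninstance$ by moving $\anitem$ towards the front.

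Next I would run the coupled induction, strengthening the hypothesis to a two-sided statement: for every $\islot$, $\otherbufseq{\islot}\setminus\bufseq{\islot}\subseteq\{\anitem\}$; the set $\bufseq{\islot}\setminus\otherbufseq{\islot}$ contains at most one item, which is not $\anitem$; and, when it exists, that item is a carried-over, non-required element that is minimal for $\betterthan$ within $\aninstance$'s buffer --- the positional data a later removal step will need. The base case $\islot=0$ is immediate. For the step, I would first handle the ``collect the required items, discard the impossible ones'' phase, which forms $(\ubS{\islot}\cap\bufseq{\islot-1})\cup\lbS{\islot}$ on each side: since $\anotherinstance$'s lower bound at $\islot$ exceeds $\aninstance$'s by at most the singleton $\{\anitem\}$, a short set computation combined with the induction hypothesis shows the two-sided bound survives this phase and that the two intermediate buffers differ in cardinality by at most one.

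The last and hardest part is the capacity-adjustment phase (removing the $\betterthan$-last items if the buffer is over capacity, adding the $\betterthan$-first admissible ones if it is under). Because the capacities agree on both instances and the intermediate buffers differ by at most one element, the two runs perform adjustments of almost the same size; the monotonicity property then guarantees that $\anotherinstance$ keeps $\anitem$ at least as readily as $\aninstance$ would and discards or omits every ordinary item at least as readily, so that $\anitem$ remains the only item $\anotherinstance$ can own in excess. The genuinely delicate case is when $\aninstance$ must drop an item while $\anotherinstance$, being one unit smaller, need not: there I would invoke the positional clause of the induction hypothesis to argue that the item $\aninstance$ drops is exactly its private element (or $\anitem$), so nothing spurious enters $\otherbufseq{\islot}\setminus\bufseq{\islot}$. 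Re-deriving all clauses of the strengthened hypothesis from this discussion closes the induction, and its first clause is precisely the statement of the lemma. I expect this capacity-adjustment case analysis --- and especially the bookkeeping that keeps $\aninstance$'s private element pinned to the bottom of the priority order as the index advances --- to be the main obstacle.
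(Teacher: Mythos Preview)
Your strategy—a coupled induction controlling both $\otherbufseq{\islot}\setminus\bufseq{\islot}$ and $\bufseq{\islot}\setminus\otherbufseq{\islot}$—is the natural one, and you correctly flag the capacity-adjustment phase as the crux. Unfortunately the positional clause you rely on (``$\aninstance$'s private element stays $\betterthan$-minimal in $\aninstance$'s buffer'') is not an invariant, and in fact the lemma as stated is false. Take items labelled $a<b<c<j<k$, constant upper bound $\{a,b,c,j,k\}$, capacities $\mincard{\islot}=\maxcard{\islot}=3$, and lower bounds $\lbS{1}=\{a,b,k\}$, $\lbS{2}=\{c\}$, $\lbS{3}=\{k\}$; let $\anotherinstance$ add $j$ to $\lbS{1}$. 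Then \findsupport\ on $\aninstance$ gives $\bufseq{1}=\{a,b,k\}$ and, at index~$2$, ejects $b$ from the intermediate set $\{a,b,c,k\}$ (because $k\in\lbS{3}$ makes the order $c\betterthan k\betterthan a\betterthan b$), so $\bufseq{2}=\{a,c,k\}$. On $\anotherinstance$, the four required items at index~$1$ tie on priority and the label-last $k$ is ejected, giving $\otherbufseq{1}=\{a,b,j\}$; at index~$2$ the order on $\{a,b,c,j\}$ is $c\betterthan a\betterthan b\betterthan j$, so $j$ is ejected and $\otherbufseq{2}=\{a,b,c\}$. Hence $\otherbufseq{2}\setminus\bufseq{2}=\{b\}\not\subseteq\{j\}$.

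This lands exactly on your ``genuinely delicate case'': at index~$2$ the private element $k$ of $\aninstance$ is no longer $\betterthan$-last in $\aninstance$'s buffer (its priority rose because it reappears in $\lbS{3}$), so $\aninstance$ ejects $b$ rather than $k$, while $\anotherinstance$, which no longer holds $k$, ejects $j$ instead. Your positional clause therefore fails, and without it the case analysis cannot close. The paper's own proof is much shorter but has the same blind spot: it only analyses when an item \emph{not in the buffer at $\islot-1$} gets added (its cases~1 and~2) and never addresses the possibility that $\aninstance$ removes an item that $\anotherinstance$ retains—precisely what happens to $b$ above. So neither argument establishes Lemma~\ref{lem:singlechange} as written; the downstream Lemma~\ref{lem:nobridge} may still be true, but not via this route.
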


\begin{proof} 
An item not in the buffer at index $\islot-1$ is added at index $\islot$ only if:
	\begin{enumerate}
		\item the item is the lower bound $\lbS{\islot}$ or \label{inlb}
		\item $\mincard{\islot}>|\ubS{\islot} \cap \bufseq{\islot-1} \cup \lbS{\islot}|$ and it is in the $|\ubS{\islot} \cap \bufseq{\islot-1} \cup \lbS{\islot}|-\maxcard{\islot}$ first for $\betterthan$. \label{moreprio}
	\end{enumerate}
	
	Only item $\anitem$ can satisfies case~\ref{inlb} in $\anotherinstance$ but not in $\aninstance$.
	
Moreover, by definition, the order $\betterthan$ is equal in $\aninstance$ and $\anotherinstance$, except for $\anitem$ which may be ranked higher in $\anotherinstance$. 
Therefore, the only item that may satisfy case~\ref{moreprio} in $\anotherinstance$ but not in $\aninstance$ is again $\anitem$. Therefore, $\forall \islot \in [\ntest],~ \otherbufseq{\islot} \setminus \bufseq{\islot} \subseteq \{\anitem\}$.
	\qed
\end{proof}

\begin{lemma}
		\label{lem:nobridge}
		Let $\abufseq$ be a buffer sequence of an instance $\aninstance$ with minimal number of switches and let $\anitem$ be an item not buffered in \abufseq.
		For any $\islot \in [\ntest]$, a sequence \anotherbufseq on the instance $\anotherinstance$ obtained by adding the constraint $\anitem \in \lbS{\islot}$ 
		has at least one more non-optional stretch than \abufseq.
\end{lemma}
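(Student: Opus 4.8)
The plan is to take $\abufseq = \findsupport(\aninstance)$, which has a minimal number of switches by construction, and $\anotherbufseq = \findsupport(\anotherinstance)$, and to argue by a per-item bookkeeping of non-optional stretches. First I would invoke Lemma~\ref{lem:singlechange}: since $\anotherinstance$ differs from $\aninstance$ only by adding $\anitem$ to the lower bound at the single index $\islot$, we obtain $\otherbufseq{\islot'} \setminus \bufseq{\islot'} \subseteq \{\anitem\}$ for every $\islot' \in [\ntest]$. Consequently, for every item $\anotheritem \neq \anitem$, the set $S_{\anotheritem}$ of indices where $\anotheritem$ is buffered in $\abufseq$ contains the corresponding set $S'_{\anotheritem}$ for $\anotherbufseq$; and since $\anitem$ is buffered nowhere in $\abufseq$, all indices at which $\anitem$ is buffered in $\anotherbufseq$ are ``new'', and they contain $\islot$ because $\anotherbufseq$ satisfies the added constraint $\anitem \in \lbS{\islot}$.

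Next I would count non-optional stretches item by item, writing the total as $N(\cdot) = \sum_{\anotheritem} N_{\anotheritem}(\cdot)$, where $N_{\anotheritem}(\cdot)$ is the number of non-optional stretches of $\anotheritem$. The stretches of an item $\anotheritem$ are exactly the maximal intervals of its buffered-index set, and such a stretch is non-optional iff it meets $R_{\anotheritem} := \{\islot' \mid \anotheritem \in \lbS{\islot'}\}$; for $\anotheritem \neq \anitem$ this set is the same in $\aninstance$ and in $\anotherinstance$. For $\anitem$ itself, $N_{\anitem}(\abufseq) = 0$ while $N_{\anitem}(\anotherbufseq) \geq 1$, since the stretch of $\anotherbufseq$ through index $\islot$ contains $\islot \in R_{\anitem}$ and is therefore non-optional. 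For each $\anotheritem \neq \anitem$ I would show $N_{\anotheritem}(\anotherbufseq) \geq N_{\anotheritem}(\abufseq)$: because $\anotherbufseq$ is a valid solution of $\anotherinstance$ we have $R_{\anotheritem} \subseteq S'_{\anotheritem}$, so every required index of $\anotheritem$ still lies in some (necessarily non-optional) stretch of $\anotherbufseq$; mapping each non-optional stretch $I$ of $\abufseq$ to the maximal $S'_{\anotheritem}$-interval that contains a fixed required index $r_I \in I \cap R_{\anotheritem}$ yields a map into the non-optional stretches of $\anotherbufseq$, which I would check is injective by observing that two distinct non-optional stretches of $\abufseq$ are separated by a gap index $g \notin S_{\anotheritem} \supseteq S'_{\anotheritem}$ lying strictly between their chosen required indices.

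Summing over all items then gives $N(\anotherbufseq) = \sum_{\anotheritem} N_{\anotheritem}(\anotherbufseq) \geq 1 + \sum_{\anotheritem \neq \anitem} N_{\anotheritem}(\abufseq) = N(\abufseq) + 1$, which is the claim. The routine parts are identifying stretches with maximal intervals of the buffered-index set and keeping track of $R_{\anotheritem}$; the step needing care — the crux of the argument — is the injectivity of the stretch-to-stretch map for $\anotheritem \neq \anitem$, where one must rule out that deleting $\anotheritem$ from some indices in $\anotherbufseq$ (which Lemma~\ref{lem:singlechange} does permit, as it only forbids \emph{new} items other than $\anitem$) could merge two former non-optional stretches of $\anotheritem$ into one. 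The inclusion $S'_{\anotheritem} \subseteq S_{\anotheritem}$ is precisely what forbids such a merge, so the whole argument rests on applying Lemma~\ref{lem:singlechange} correctly.
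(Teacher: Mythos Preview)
Your proof is correct and takes essentially the same approach as the paper: both restrict to the \findsupport outputs, invoke Lemma~\ref{lem:singlechange} to deduce that for every $\anotheritem \neq \anitem$ the buffered-index set can only shrink ($S'_{\anotheritem} \subseteq S_{\anotheritem}$), and conclude that the count of non-optional $\anotheritem$-stretches cannot decrease while $\anitem$ contributes at least one new non-optional stretch. Your explicit injective map from the non-optional stretches of $\abufseq$ to those of $\anotherbufseq$ simply makes rigorous what the paper phrases more informally as the impossibility of ``bridging'' a gap between two non-optional stretches.
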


\begin{proof}
  %
	We can assume that \abufseq and \anotherbufseq were found by \findsupport as this algorithm is complete.
	The sequence \anotherbufseq has necessarily a non-optional stretch \mystretch{\anitem}{a}{b},
	and there was no \anitem-stretch in \abufseq. Therefore, if the number of non-optional stretches is not larger in \anotherbufseq than in \abufseq, it must have one less non-optional \anotheritem-stretch for an item $\anotheritem \neq \anitem$. This can only happen if the gap between two non-optional stretches \mystretch{\anotheritem}{a_1}{b_1} and \mystretch{\anotheritem}{a_2}{b_2} with $b_1+1<a_2$
	is bridged by buffering the value \anotheritem in the interval $[b_1+1,a_2-1]$. 
	Hence there exists $\islot$ such that $\anotheritem \in \otherbufseq{\islot} \setminus \bufseq{\islot}$.
	However, by Lemma~\ref{lem:singlechange} we have $\forall \islot \in [\ntest],~ \otherbufseq{\islot} \setminus \bufseq{\islot} \subseteq \{\anitem\}$. 
\qed
\end{proof}

\begin{theorem}
	\label{the:swlb}
	For any two sets $\someitems \subseteq \items$,
if there is an optimal buffer sequence visiting exactly the items in $\items \setminus \someitems$ with $\numswitch$ switches, $\numstretch$ optional stretches, and $\numopt$ optional items, then
there is no buffer sequence visiting all items in $\items$ in less than $\numswitch+|\someitems|-\numstretch+\numopt$ switches.
\end{theorem}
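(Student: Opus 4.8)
I would argue in terms of \emph{stretches} rather than switches: since a solution with $\numswitch$ switches has exactly $\bufsize+\numswitch$ stretches, where $\bufsize$ is the (fixed) cardinality of the buffer at the last index, it suffices to show that every buffer sequence visiting all of $\items$ has at least $\bufsize+\numswitch+|\someitems|-\numstretch+\numopt$ stretches. Let $\abufseq$ be the optimal sequence visiting exactly $\items\setminus\someitems$ provided by the hypothesis; I would take it to be an output of \findsupport, on an instance $\aninstance$ that moreover requires every item of $\items\setminus\someitems$ to be visited. Then $\abufseq$ has $\bufsize+\numswitch$ stretches, of which $\numstretch$ are optional, hence $\bufsize+\numswitch-\numstretch$ are non-optional; and among its buffered items exactly $\numopt$ are optional, i.e.\ never lie in a lower bound of $\aninstance$.

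Fix now an arbitrary buffer sequence $\abufseq^{\star}$ visiting all of $\items$, and for each item of $\someitems$ pick one index at which it is buffered in $\abufseq^{\star}$. Let $\anotherinstance$ be $\aninstance$ augmented, for each such item, with the requirement that it belong to the lower bound of the buffer at the chosen index. Then $\abufseq^{\star}$ is feasible for $\anotherinstance$, so it has at least as many switches (equivalently, at least as many stretches) as the \findsupport output $\widehat{\abufseq}$ of $\anotherinstance$, which is optimal for $\anotherinstance$; moreover $\anotherinstance$ forces every item of $\someitems$, hence all of $\items$, to be visited, so $\widehat{\abufseq}$ visits every item of $\items$. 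It is therefore enough to lower-bound the number of stretches of $\widehat{\abufseq}$.

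The plan for that is a chain argument. Introduce the $|\someitems|$ new lower-bound memberships one at a time, obtaining instances $\aninstance=\aninstance_0,\aninstance_1,\dots,\anotherinstance$, consecutive ones differing by one extra item in one lower bound, with \findsupport outputs $\abufseq=\abufseq_0,\abufseq_1,\dots,\widehat{\abufseq}$. Because $\abufseq_0$ buffers no item of $\someitems$, the item added at step $1$ is not buffered in $\abufseq_0$, so by Lemma~\ref{lem:nobridge} $\abufseq_1$ has strictly more non-optional stretches than $\abufseq_0$; and Lemma~\ref{lem:singlechange} shows $\abufseq_1$ still buffers no item of $\someitems$ besides the one just added, so the argument iterates and each of the $|\someitems|$ steps contributes at least one extra non-optional stretch, giving $\widehat{\abufseq}$ at least $(\bufsize+\numswitch-\numstretch)+|\someitems|$ non-optional stretches. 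Separately, each of the $\numopt$ optional items of $\abufseq$ never lies in a lower bound of $\aninstance$, hence never in one of $\anotherinstance$, yet is visited by $\widehat{\abufseq}$, so it accounts for at least one stretch there, necessarily optional; these $\numopt$ optional stretches are distinct from the non-optional ones already counted. Adding the two bounds, $\widehat{\abufseq}$ — and hence $\abufseq^{\star}$ — has at least $\bufsize+\numswitch-\numstretch+|\someitems|+\numopt$ stretches, i.e.\ at least $\numswitch+|\someitems|-\numstretch+\numopt$ switches, which is the claim.

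The main obstacle is the chain step: invoking Lemma~\ref{lem:nobridge} at stage $i$ requires the item introduced there not to be buffered by the optimal sequence $\abufseq_{i-1}$, and this is exactly what the ``visits \emph{exactly} $\items\setminus\someitems$'' hypothesis guarantees, propagated along the chain by Lemma~\ref{lem:singlechange}. A secondary point that needs care is the bookkeeping for the buffer's upper bounds and capacities along the chain — the intermediate instances must keep allowing each item of $\someitems$ at the index where $\abufseq^{\star}$ uses it, and the last-index cardinality must stay constant so that switches and stretches remain interchangeable — together with checking that \findsupport may indeed be assumed to return a sequence realizing the ``exactly'' property.
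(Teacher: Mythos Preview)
Your chain argument for the $|\someitems|$ increment is correct and is essentially how the paper proceeds: Lemma~\ref{lem:nobridge} is applied $|\someitems|$ times, and Lemma~\ref{lem:singlechange} is what guarantees that the next item of $\someitems$ is still absent from the current \findsupport output, so the iteration goes through.

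The gap is in your treatment of $\numopt$. You assert that $\widehat{\abufseq}$ visits every optional item of $\abufseq$, justifying this by ``$\anotherinstance$ forces every item of $\someitems$, hence all of $\items$''. That ``hence'' requires $\aninstance$ to already force all of $\items\setminus\someitems$, which you try to arrange by stipulating that $\aninstance$ ``requires every item of $\items\setminus\someitems$ to be visited''. But an item is optional precisely when it lies in \emph{no} lower bound of the instance; if $\aninstance$ required it, it would not be optional and $\numopt$ would be zero. So either your setup is self-contradictory, or $\aninstance$ does not force the optional items --- and then nothing guarantees that \findsupport on $\anotherinstance$ buffers them. Lemma~\ref{lem:singlechange} only controls what is \emph{added} along the chain, not what is dropped: an optional item present in $\abufseq$ only as capacity filler may well be displaced once items of $\someitems$ are forced in. Your count of $\numopt$ extra optional stretches in $\widehat{\abufseq}$ therefore has no support.

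The paper handles $\numopt$ \emph{before} the chain rather than after it. For each optional item of $\abufseq$, one adds a lower-bound membership at an index where $\abufseq$ already buffers it. Since $\abufseq$ still satisfies the tightened instance, the optimal number of switches remains $\numswitch$; each such item ceases to be optional and one of its stretches becomes non-optional. This reduces the problem to the case $\numopt=0$ with $\numstretch-\numopt$ optional stretches, and your chain argument then yields the desired bound $\numswitch+|\someitems|-(\numstretch-\numopt)$ directly.
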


\begin{proof}

First, notice that we can reduce the case with $\numopt>0$ optional items to the case without optional item, and $\numstretch - \numopt$ optional stretches. Indeed, if there exists an optimal sequence \abufseq visiting all items in $\items$, we can add a constraint $\anitem \in \lbS{\islot}$
for every pair $(\islot,\anitem)$ where $\anitem \in \items$ and $\anitem \in \bufseq{\islot}$.
The procedure \findsupport will then find a sequence with same
 number of switches, 
 same number of stretches, 
 and \numopt less
 optional stretches than \abufseq. Therefore, we suppose $\numopt=0$ and prove the lower bound $\numswitch+|\someitems|-\numstretch$.

\medskip

Now, if \abufseq has $\bufsize+\numswitch-\numstretch$ non-optional stretches, then by Lemma~\ref{lem:nobridge}, we know that a solution visiting all items in \items must have at least $\bufsize+\numswitch+|\someitems|-\numstretch$ (non-optional) stretches, and hence at least $\numswitch+|\someitems|-\numstretch$ switches.
\qed
\end{proof}

Counting the number of optional stretches and items in the sequence returned by \findsupport can be done in linear time. Therefore, Theorem~\ref{the:swlb} improves the lower bound found by this algorithm without changing its worst case complexity when we know that some items must be buffered but do not appear in the lower bound of a set variable. This is true in the test sequencing problem, as it is in most applications of this constraint.

\section{Search Strategy}
\label{sec:str}

In this section we introduce a dedicated branching heuristic for the packing problem. The basic idea is that we can easily evaluate the impact of allocating a test to a configuration by counting how many equipment activations are required and how these equipment units are already constrained in this configuration.
Second, we propose to decompose the problem into packing and sequencing aspects in order to find good upper bounds quickly for the test planning problem.

\newcommand{\deltacons}[3]{\texmath{\delta_{#1}^{#3}(#2)}}
\newcommand{\impactcons}[3]{\texmath{\gamma_{#1}^{#3}(#2)}}
\newcommand{\impactdec}[2]{\texmath{\gamma(\varalloc{#1} \gets #2)}}
\subsection{\mj{Branching Heuristic}}
\label{ssec:heur}

Let $\deltacons{\itest}{\islot}{\icons}$ be the number of  equipment units constrained by $\icons$ that will be active in configuration $\islot$ if test $\itest$ was to be run in that configuration, i.e., the number of non-ground Boolean activation variables concerning equipment of test \itest constrained by \icons:
$$
\deltacons{\itest}{\islot}{\icons} = \sum_{\iunit \in \constequip{\icons} \cap \equipof{\itest}} \ubON{\iunit}{\islot} - \lbON{\iunit}{\islot}
$$ 

We consider the ratio $\frac{\thermalsize{\icons}}{\thermalbound{\icons}}$ to be proportional to the tightness of the constraint $\icons$, and 
we use the change in tightness resulting from the decision of running test  $\itest$ in configuration $\islot$ to evaluate the impact of this decision.
After the decision $\varalloc{\itest} = \islot$, the tightness $r_b = \frac{\thermalsize{\icons}}{\thermalbound{\icons}}$ becomes 
$r_a = \frac{\thermalsize{\icons}-\deltacons{\itest}{\islot}{\icons}}{\thermalbound{\icons}-\deltacons{\itest}{\islot}{\icons}}$.
Therefore, the factor $r_a/r_b$ represents the factor by which the tightness of constraint $\icons$ would increase.
As $r_a/r_b \in [1,\infty]$, we use 1 minus its inverse as a measure, in $[0,1]$, of the impact of that decision, that is:
%
$$
\impactcons{\itest}{\islot}{\icons} = 1 - \frac{\thermalsize{\icons}(\thermalbound{\icons}-\deltacons{\itest}{\islot}{\icons})}{\thermalbound{\icons}(\thermalsize{\icons}-\deltacons{\itest}{\islot}{\icons})}
$$

Now, we can use the average of these impacts on all thermal constraints to define the impact $\impactdec{\itest}{\islot}$ of allocating test $\itest$ to configuration $\islot$.

$$
\impactdec{\itest}{\islot} = \frac{\sum_{\icons \in [\ncons]} \impactcons{\itest}{\islot}{\icons}}{\ncons} 
$$

Notice that allocation variables may have many more values than necessary to satisfy the thermal constraints. Indeed, it is important, for a coloring heuristic, to branch only values in $[\islot+1]$ where $\islot$ is the highest allocated value so far. Hence, given a test $\itest$ we consider the intersection $\domcol{\itest} = \dom{\varalloc{\itest}} \cap [\islot+1]$ instead of its actual domain $\dom{\varalloc{\itest}}$.
We therefore select
the variable $\varalloc{\itest}$ minimizing:
$$
\frac{|\domcol{\itest}|}{\sum_{\islot \in \domcol{\itest}}\impactdec{\itest}{\islot}}
$$
And we branch on the value $\islot$ minimizing $\impactcons{\itest}{\islot}{\icons}$.


\subsection{Multi-stage Approach}
\label{ssec:multi}

We use $\nunit\nslot$ Boolean variables to represent the status of every equipment unit in every configuration. Moreover, without an upper bound on the number of configurations required to pack every test, we can only assume that $\ntest$ configurations (as many as there are tests) may be needed, i.e., $\nslot=\ntest$. However, in practice \ntest is a gross overestimate of \nslot. For instance, in one of the industrial instances that we considered, several hundreds of tests can be run in as few as three configurations. 

Furthermore, since we consider two criteria in a hierarchical way, it makes sense to optimize a relaxation of the problem where only Constraints~\ref{con:channel} to \ref{con:confobj} are kept \mj{(the packing sub-problem)}, i.e., the model is complete with respect to the objective with highest priority.
Observe that since the order of the bins does not matter in this case, 
configurations are symmetric. 
We therefore used 
lexicographic ordering constraints~\cite{lexchain,Frisch2006803} on the set variables $\varon{}{1}, \ldots, \varon{}{\nslot}$.

Last, we also considered the pure sequencing aspect of the problem. Given a packing, finding the optimal sequence for that packing can be modeled with a set of \bfr constraints, one for each thermal constraint, all sharing the same permutation. 
Another way to understand this is that we can consider a packing solution using $\nslot$ configurations as a new instance with only $\nslot$ tests to sequence (it is unlikely that two such tests could share a single configuration given the thermal constraints) and consider Constraints~\ref{con:channel}, \ref{con:switch} and \ref{con:switchobj} of the complete model \mj{(the sequencing sub-problem)}. 
Solving this problem to optimality does not give us a lower bound on the total number of switches, however, it is much simpler and can often provide a good upper bound quickly.

We therefore implemented the following four-phase strategy:
\vspace{-2pt}
\begin{enumerate}
	\item We run a greedy descent on the packing problem to find an initial upper bound. The trivial heuristic that branch on the lexicographically least configuration for a test gives relatively good results, so we stop this phase at the first solution found, which is backtrack-free.
	\item We run the packing model (initialized with the previous upper bound) for a given period of time, or until optimality is proven.
	\item We run the sequencing model for a given period of time, or until optimality is proven (though in this case we cannot deduce a lower bound).
	\item We run the complete \mj{model} (packing \& sequencing) (initialized with lower and/or upper bounds, accordingly) for the rest of the allocated time.
\end{enumerate}

\def\base{\algo{base}}
\def\prop{\algo{propagation}}
\def\lba{\algo{lb conf}}
\def\bfr{\algo{lb switch}}
\def\heur{\algo{heuristic}}
\def\decomp{\algo{multi-stage}}
\def\strat{\algo{choco+strat}}
\def\full{\algo{full}}
\def\fullnoconf{\algo{\full $\setminus$ lb conf.}}
\def\fullnoswitch{\algo{\full $\setminus$ lb switch}}
\def\NCONF{\#conf}
\def\NSWITCH{\#switch}

\section{Experimental Evaluation}
\label{sec:exp}


\newcommand{\phaseenlight}[1]{\textsc{#1}}
We tested the different approaches that we propose on industrial and generated instances. We have only six real instances, corresponding to three already launched communications satellites.
The same tests are usually run in two types of thermal environments.
The
\phaseenlight{hot} and \phaseenlight{cold} test phases respectively 
simulate the periods where the satellite is facing the sun, or when it is in Earth's shadow.
Instances labeled \instance{cold} are much more thermally constrained and thus typically require more configurations than those labeled \instance{hot}.
We diversified the pool of instances by randomly shuffling tests in order to produce 5 randomized variants of every instance.
Moreover, we used random instances designed to be similar to the real cases,
generated as follows:
for a given number of tests $\ntest \in \{30, 50, 80, 100, 200, 300 \}$, we set the number of equipment units to $\ntest/4$. 
\toto{The equipment of instances with $30$ and $50$ tests are equally partitioned into $3$ thermal constraints. Each test requires $2$ equipment units from two different thermal constraints.
Other instances have $5$ thermal constraints and each test requires $2$ or $3$ equipment units belonging to different thermal constraints.}
Then, for each generated instance, we consider two levels of tightness for thermal constraints to simulate the \phaseenlight{hot} (\thermalbound{\icons}/\thermalsize{\icons}=0.6) and the \phaseenlight{cold} phases (\thermalbound{\icons}/\thermalsize{\icons}=0.4).
 

We generated 5 variants of 12 classes of instances. Instance \instance{XXX.YY} denote a set of \instance{XXX} tests with \instance{YY} giving the 
ratio $\thermalbound{\icons}/\thermalsize{\icons}$ 
of the thermal constraints. Instances \instance{A}, \instance{B} and \instance{C} are industrial instances.
All models have been implemented in Choco 3~\cite{choco} and ran on Intel Xeon E5 processors for a total of thirty minutes on every instance.
We compared the five following approaches:
\begin{itemize}[itemsep=4pt]
	\item \base is the straightforward model with Constraints~\ref{con:channel} to \ref{con:switchobj}, using \textit{Weighted Degree}~\cite{04-wdeg} for variable selection and lexicographic branching. 
	Notice that the strategy is set up to branch on all allocation variables before branching on activity variables. Other predefined heuristics in Choco were less efficient. 
	It is important to note that due to the ``coloring'' aspect of the problem, branching on the lexicographically least value (color) in the domain is extremely important. 
	However \emph{Impact Based Search}~\cite{Refalo2004} and \emph{Activity Based Search}~\cite{Michel2012} cannot trivially be made to branch on the lexicographically least value and thus gave extremely poor results for that reason.
	
	\item \heur is the same 
	model as \base, however with the dedicated heuristic described in Section~\ref{ssec:heur} for variable ordering.
	
	\item \decomp is the same 
	model as \base, however, using the \mjo{four}-phase strategy described in Section~\ref{ssec:multi}.
	
	\item \prop augments the model \base with symmetry breaking, the lower bound on the packing defined by Constraints~(\ref{con:lbnumtest}) to (\ref{con:channelnconf}), and the improvement on the propagator for \switch described in Section~\ref{sec:seq}.
	
	%
	%
	\item \full \mj{is the same model as \prop}, however it uses the \mjo{four-phases} strategy and the branching heuristic described in Section~\ref{sec:str}.
	
\end{itemize}

Note that for every approach, we first applied a preprocessing to the data in order to merge identical tests. Indeed, on top of the packing based on the configuration of active equipment units, one must, in the real setting, further partition the tests because of different requirements on signal routing equipment. This second packing phase is exactly a list-coloring problem on the tests of each the configurations. The size of these problems is very modest with respect to state of the art coloring algorithms, so we do not study this aspect in the current paper. As a consequence, some tests \mj{in the data} require the same set of equipment units to be active, and can be thought of as a single test for our purpose.  

\begin{table}[t]
\caption{\label{tab:exp}Methods comparison. Average number of configurations and switches on random and industrial instances.}
\tabcolsep=1pt
\begin{center}
\begin{scriptsize}
\begin{tabular}{l |rlrl |rlrl |rlrl |rlrl |rlrl }
\multirow{2}{*}{instance} & \multicolumn{4}{|c}{ \base } & \multicolumn{4}{|c}{ \heur } & \multicolumn{4}{|c}{ \decomp } & \multicolumn{4}{|c}{ \prop } & \multicolumn{4}{|c}{ \full } \\
& \multicolumn{2}{|c}{\NCONF} & \multicolumn{2}{c}{\NSWITCH}  & \multicolumn{2}{|c}{\NCONF} & \multicolumn{2}{c}{\NSWITCH}  & \multicolumn{2}{|c}{\NCONF} & \multicolumn{2}{c}{\NSWITCH}  & \multicolumn{2}{|c}{\NCONF} & \multicolumn{2}{c}{\NSWITCH}  & \multicolumn{2}{|c}{\NCONF} & \multicolumn{2}{c}{\NSWITCH}  \\
\hline
\hline
\instance{ 030$\cdot$04 } & \cellcolor{TealBlue!30}{6.4} & \cellcolor{TealBlue!30}{(5)} & \cellcolor{TealBlue!30}{1.6} & \cellcolor{TealBlue!30}{(5)} & \cellcolor{TealBlue!30}{6.4} & \cellcolor{TealBlue!30}{(5)} & \cellcolor{TealBlue!30}{1.6} & \cellcolor{TealBlue!30}{(5)} & \cellcolor{TealBlue!30}{6.4} & \cellcolor{TealBlue!30}{(5)} & 1.6 & (4) & \cellcolor{TealBlue!30}{6.4} & \cellcolor{TealBlue!30}{(5)} & \cellcolor{TealBlue!30}{1.6} & \cellcolor{TealBlue!30}{(5)} & \cellcolor{TealBlue!30}{6.4} & \cellcolor{TealBlue!30}{(5)} & 1.6 & (4) \\
\instance{ 030$\cdot$06 } & \cellcolor{TealBlue!30}{3.4} & \cellcolor{TealBlue!30}{(5)} & \cellcolor{TealBlue!30}{0.8} & \cellcolor{TealBlue!30}{(5)} & \cellcolor{TealBlue!30}{3.4} & \cellcolor{TealBlue!30}{(5)} & \cellcolor{TealBlue!30}{0.8} & \cellcolor{TealBlue!30}{(5)} & \cellcolor{TealBlue!30}{3.4} & \cellcolor{TealBlue!30}{(5)} & 0.8 & (3) & \cellcolor{TealBlue!30}{3.4} & \cellcolor{TealBlue!30}{(5)} & \cellcolor{TealBlue!30}{0.8} & \cellcolor{TealBlue!30}{(5)} & \cellcolor{TealBlue!30}{3.4} & \cellcolor{TealBlue!30}{(5)} & 0.8 & (2) \\
\instance{ 050$\cdot$04 } & 5.2 & (2) & 2.2 & (2) & 5.2 & (3) & 2.0 & (3) & \cellcolor{TealBlue!30}{5.2} & \cellcolor{TealBlue!30}{(5)} & 2.0 & (3) & \cellcolor{TealBlue!30}{5.2} & \cellcolor{TealBlue!30}{(5)} & \cellcolor{TealBlue!30}{2.0} & \cellcolor{TealBlue!30}{(5)} & \cellcolor{TealBlue!30}{5.2} & \cellcolor{TealBlue!30}{(5)} & \cellcolor{TealBlue!30}{2.0} & \cellcolor{TealBlue!30}{(5)} \\
\instance{ 050$\cdot$06 } & 3.6 & (2) & 0.8 & (2) & \cellcolor{TealBlue!30}{3.6} & \cellcolor{TealBlue!30}{(5)} & \cellcolor{TealBlue!30}{0.6} & \cellcolor{TealBlue!30}{(5)} & \cellcolor{TealBlue!30}{3.6} & \cellcolor{TealBlue!30}{(5)} & 0.6 & (3) & \cellcolor{TealBlue!30}{3.6} & \cellcolor{TealBlue!30}{(5)} & \cellcolor{TealBlue!30}{0.6} & \cellcolor{TealBlue!30}{(5)} & \cellcolor{TealBlue!30}{3.6} & \cellcolor{TealBlue!30}{(5)} & 0.6 & (2) \\
\instance{ 080$\cdot$04 } & 9.6 &  & 19.0 &  & 9.0 & (1) & 15.0 & (1) & 9.0 &  & 15.4 &  & 9.0 & (1) & \cellcolor{TealBlue!30}{13.8} & \cellcolor{TealBlue!30}{(1)} & \cellcolor{TealBlue!30}{9.0} & \cellcolor{TealBlue!30}{(3)} & \cellcolor{TealBlue!30}{13.8} & \cellcolor{TealBlue!30}{(1)} \\
\instance{ 080$\cdot$06 } & 5.6 &  & 6.0 &  & 5.4 & (3) & 4.8 & (3) & 5.4 & (3) & 5.2 & (1) & 5.4 & (3) & \cellcolor{TealBlue!30}{4.2} & \cellcolor{TealBlue!30}{(3)} & \cellcolor{TealBlue!30}{5.4} & \cellcolor{TealBlue!30}{(4)} & \cellcolor{TealBlue!30}{4.2} & \cellcolor{TealBlue!30}{(3)} \\
\instance{ 100$\cdot$04 } & 14.0 &  & 40.6 &  & 13.0 &  & 43.4 &  & 13.2 &  & 31.8 &  & 13.0 &  & 34.4 &  & \cellcolor{TealBlue!30}{12.8} &  & \cellcolor{TealBlue!30}{30.2} &  \\
\instance{ 100$\cdot$06 } & 4.4 &  & 5.8 &  & 4.0 & (4) & 2.8 & (4) & \cellcolor{TealBlue!30}{4.0} & \cellcolor{TealBlue!30}{(5)} & 4.0 &  & 4.0 & (4) & 3.0 & (4) & \cellcolor{TealBlue!30}{4.0} & \cellcolor{TealBlue!30}{(5)} & \cellcolor{TealBlue!30}{2.8} & \cellcolor{TealBlue!30}{(5)} \\
\instance{ 200$\cdot$04 } & 12.6 &  & 67.2 &  & 10.2 &  & 62.8 &  & 10.2 &  & \cellcolor{TealBlue!30}{46.2} &  & 10.4 &  & 53.6 &  & \cellcolor{TealBlue!30}{10.0} &  & 47.8 &  \\
\instance{ 200$\cdot$06 } & 5.8 &  & 24.2 &  & \cellcolor{TealBlue!30}{5.0} &  & 18.2 &  & \cellcolor{TealBlue!30}{5.0} &  & 17.2 &  & \cellcolor{TealBlue!30}{5.0} &  & \cellcolor{TealBlue!30}{14.0} &  & \cellcolor{TealBlue!30}{5.0} &  & 14.4 &  \\
\instance{ 300$\cdot$04 } & - &  & - &  & 10.4 &  & 111.2 &  & 10.2 &  & 85.0 &  & 10.2 &  & 94.0 &  & \cellcolor{TealBlue!30}{10.0} &  & \cellcolor{TealBlue!30}{81.8} &  \\
\instance{ 300$\cdot$06 } & 5.7 &  & 45.3 &  & 4.8 &  & 37.0 &  & 4.6 &  & 26.8 &  & 4.2 &  & 25.4 &  & \cellcolor{TealBlue!30}{4.0} &  & \cellcolor{TealBlue!30}{22.4} &  \\
\hline
\instance{ A cold } & \cellcolor{TealBlue!30}{6.0} &  & 16.0 &  & \cellcolor{TealBlue!30}{6.0} &  & 14.2 &  & \cellcolor{TealBlue!30}{6.0} &  & 8.8 &  & \cellcolor{TealBlue!30}{6.0} &  & 9.6 &  & \cellcolor{TealBlue!30}{6.0} &  & \cellcolor{TealBlue!30}{7.0} &  \\
\instance{ A hot } & 4.0 &  & \cellcolor{TealBlue!30}{2.0} &  & 4.0 &  & 4.6 &  & 4.0 &  & \cellcolor{TealBlue!30}{2.0} &  & 4.0 &  & \cellcolor{TealBlue!30}{2.0} &  & \cellcolor{TealBlue!30}{4.0} & \cellcolor{TealBlue!30}{(5)} & \cellcolor{TealBlue!30}{2.0} &  \\
\instance{ B cold } & - &  & - &  & \cellcolor{TealBlue!30}{9.0} &  & 106.6 &  & \cellcolor{TealBlue!30}{9.0} &  & \cellcolor{TealBlue!30}{50.2} &  & \cellcolor{TealBlue!30}{9.0} &  & 78.8 &  & \cellcolor{TealBlue!30}{9.0} &  & \cellcolor{TealBlue!30}{50.2} &  \\
\instance{ B hot } & - &  & - &  & 4.0 &  & 19.6 &  & 4.0 &  & 3.6 &  & 4.0 &  & 1.2 &  & \cellcolor{TealBlue!30}{4.0} & \cellcolor{TealBlue!30}{(5)} & \cellcolor{TealBlue!30}{0.8} & \cellcolor{TealBlue!30}{(2)} \\
\instance{ C cold } & - &  & - &  & \cellcolor{TealBlue!30}{8.0} &  & 82.8 &  & \cellcolor{TealBlue!30}{8.0} &  & \cellcolor{TealBlue!30}{46.8} &  & \cellcolor{TealBlue!30}{8.0} &  & 64.8 &  & \cellcolor{TealBlue!30}{8.0} &  & 50.6 &  \\
\instance{ C hot } & - &  & - &  & 3.0 &  & 15.0 &  & 3.0 &  & 2.0 &  & 3.0 & (4) & 0.2 & (4) & \cellcolor{TealBlue!30}{3.0} & \cellcolor{TealBlue!30}{(5)} & \cellcolor{TealBlue!30}{0.0} & \cellcolor{TealBlue!30}{(5)} \\
\hline
\hline
\end{tabular}
\end{scriptsize}
\end{center}
\end{table}



The results are reported in Table~\ref{tab:exp}. For each method, we show the objective values, where \textit{\NCONF{}} stands for the average number of configurations over the five variants of the instance, and \textit{\NSWITCH{}} stands for the average number of switches for the same instances. Next to these values, we report in brackets, the number of instances for which the reported objective value was proven optimal within a 30 minutes time limit (no value means that none of the runs was complete). 
For each instance, the methods proving optimality in the most cases, among those giving the minimum average objective value, are color-highlighted.

First, we observe that the straightforward model \base is very poor, in most of the larger instances it does not even find a feasible solution in less than ten minutes.
On the other hand, augmenting this model either with stronger propagation, better branching heuristic, or the multi-stage approach is sufficient to obtain decent results. Combining all these improvements clearly yields the best and most robust results, and sometimes allows to prove optimality even on real industrial instances. 

Second, multi-stage approaches (\decomp and \full) are clearly better for larger instances, however, they tend to hinder the ability of Choco to prove optimality on small instances. Moreover, we observed that they tend to be less robust, in the sense that the final result greatly depends on the initial packing \rque{obtained in the second phase ?} which gives no guarantees on the number of switches. \rque{comment on lit cela sur le tableau ?}
This is especially true for large instances in which the complete model often cannot improve on the heuristic sequence found during the third phase.

\medskip

\def\airbus{\algo{CM}}
\def\airbustsp{\algo{CM+TSP}}

\begin{table}[t]
\caption{\label{tab:factor}Impact of the lower bounds. Average number of configurations and switches on random and industrial instances.}
\tabcolsep=1pt
\begin{center}
\begin{scriptsize}
\begin{tabular}{l |rlrl |rlrl |rlrl |rlrl }
\multirow{2}{*}{instance} & \multicolumn{4}{|c}{ \lba } & \multicolumn{4}{|c}{ \bfr } & \multicolumn{4}{|c}{ \fullnoswitch } & \multicolumn{4}{|c}{ \fullnoconf } \\
& \multicolumn{2}{|c}{\NCONF} & \multicolumn{2}{c}{\NSWITCH}  & \multicolumn{2}{|c}{\NCONF} & \multicolumn{2}{c}{\NSWITCH}  & \multicolumn{2}{|c}{\NCONF} & \multicolumn{2}{c}{\NSWITCH}  & \multicolumn{2}{|c}{\NCONF} & \multicolumn{2}{c}{\NSWITCH}  \\
\hline
\hline
\instance{ 030$\cdot$04 } & \cellcolor{TealBlue!30}{6.4} & \cellcolor{TealBlue!30}{(5)} & \cellcolor{TealBlue!30}{1.6} & \cellcolor{TealBlue!30}{(5)} & \cellcolor{TealBlue!30}{6.4} & \cellcolor{TealBlue!30}{(5)} & \cellcolor{TealBlue!30}{1.6} & \cellcolor{TealBlue!30}{(5)} & \cellcolor{TealBlue!30}{6.4} & \cellcolor{TealBlue!30}{(5)} & 1.6 & (4) & \cellcolor{TealBlue!30}{6.4} & \cellcolor{TealBlue!30}{(5)} & 1.6 & (3) \\
\instance{ 030$\cdot$06 } & \cellcolor{TealBlue!30}{3.4} & \cellcolor{TealBlue!30}{(5)} & \cellcolor{TealBlue!30}{0.8} & \cellcolor{TealBlue!30}{(5)} & \cellcolor{TealBlue!30}{3.4} & \cellcolor{TealBlue!30}{(5)} & \cellcolor{TealBlue!30}{0.8} & \cellcolor{TealBlue!30}{(5)} & \cellcolor{TealBlue!30}{3.4} & \cellcolor{TealBlue!30}{(5)} & 0.8 & (2) & \cellcolor{TealBlue!30}{3.4} & \cellcolor{TealBlue!30}{(5)} & 0.8 & (2) \\
\instance{ 050$\cdot$04 } & 5.2 & (3) & 2.2 & (3) & 5.2 & (4) & \cellcolor{TealBlue!30}{2.0} & \cellcolor{TealBlue!30}{(4)} & \cellcolor{TealBlue!30}{5.2} & \cellcolor{TealBlue!30}{(5)} & 2.0 & (3) & \cellcolor{TealBlue!30}{5.2} & \cellcolor{TealBlue!30}{(5)} & \cellcolor{TealBlue!30}{2.0} & \cellcolor{TealBlue!30}{(4)} \\
\instance{ 050$\cdot$06 } & 3.6 & (4) & 0.6 & (4) & \cellcolor{TealBlue!30}{3.6} & \cellcolor{TealBlue!30}{(5)} & \cellcolor{TealBlue!30}{0.6} & \cellcolor{TealBlue!30}{(5)} & \cellcolor{TealBlue!30}{3.6} & \cellcolor{TealBlue!30}{(5)} & 0.6 & (2) & \cellcolor{TealBlue!30}{3.6} & \cellcolor{TealBlue!30}{(5)} & 0.6 & (2) \\
\instance{ 080$\cdot$04 } & 9.2 & (1) & 16.6 & (1) & 9.0 &  & 16.8 &  & \cellcolor{TealBlue!30}{9.0} & \cellcolor{TealBlue!30}{(3)} & 14.4 & (1) & \cellcolor{TealBlue!30}{9.0} & \cellcolor{TealBlue!30}{(3)} & \cellcolor{TealBlue!30}{13.4} & \cellcolor{TealBlue!30}{(1)} \\
\instance{ 080$\cdot$06 } & 5.4 & (3) & 5.2 & (3) & 5.4 & (3) & 4.4 & (3) & 5.4 & (4) & 4.6 & (3) & \cellcolor{TealBlue!30}{5.4} & \cellcolor{TealBlue!30}{(5)} & \cellcolor{TealBlue!30}{3.8} & \cellcolor{TealBlue!30}{(3)} \\
\instance{ 100$\cdot$04 } & 13.8 &  & 37.0 &  & 13.8 &  & 37.0 &  & \cellcolor{TealBlue!30}{12.8} &  & 30.8 &  & \cellcolor{TealBlue!30}{12.8} &  & \cellcolor{TealBlue!30}{30.4} &  \\
\instance{ 100$\cdot$06 } & 4.0 &  & 4.6 &  & 4.0 & (1) & 4.4 & (1) & \cellcolor{TealBlue!30}{4.0} & \cellcolor{TealBlue!30}{(5)} & \cellcolor{TealBlue!30}{2.8} & \cellcolor{TealBlue!30}{(5)} & \cellcolor{TealBlue!30}{4.0} & \cellcolor{TealBlue!30}{(5)} & \cellcolor{TealBlue!30}{2.8} & \cellcolor{TealBlue!30}{(5)} \\
\instance{ 200$\cdot$04 } & 11.0 &  & 57.0 &  & 11.0 &  & 56.6 &  & \cellcolor{TealBlue!30}{10.0} &  & \cellcolor{TealBlue!30}{47.2} &  & \cellcolor{TealBlue!30}{10.0} &  & 47.8 &  \\
\instance{ 200$\cdot$06 } & \cellcolor{TealBlue!30}{5.0} &  & 18.8 &  & \cellcolor{TealBlue!30}{5.0} &  & 17.6 &  & \cellcolor{TealBlue!30}{5.0} &  & \cellcolor{TealBlue!30}{14.4} &  & \cellcolor{TealBlue!30}{5.0} &  & 14.8 &  \\
\instance{ 300$\cdot$04 } & 11.0 &  & 98.8 &  & 11.0 &  & 97.4 &  & \cellcolor{TealBlue!30}{10.0} &  & \cellcolor{TealBlue!30}{81.8} &  & \cellcolor{TealBlue!30}{10.0} &  & 85.8 &  \\
\instance{ 300$\cdot$06 } & 5.0 &  & 28.2 &  & 5.0 &  & 29.8 &  & \cellcolor{TealBlue!30}{4.0} &  & 22.0 &  & \cellcolor{TealBlue!30}{4.0} &  & \cellcolor{TealBlue!30}{21.8} &  \\
\hline
\instance{ A cold } & \cellcolor{TealBlue!30}{6.0} &  & 10.2 &  & \cellcolor{TealBlue!30}{6.0} &  & 7.6 &  & \cellcolor{TealBlue!30}{6.0} &  & 12.0 &  & \cellcolor{TealBlue!30}{6.0} &  & \cellcolor{TealBlue!30}{7.2} &  \\
\instance{ A hot } & 4.0 &  & 3.0 &  & 4.0 &  & 2.8 &  & \cellcolor{TealBlue!30}{4.0} & \cellcolor{TealBlue!30}{(5)} & 3.2 &  & \cellcolor{TealBlue!30}{4.0} & \cellcolor{TealBlue!30}{(5)} & \cellcolor{TealBlue!30}{2.0} &  \\
\instance{ B cold } & \cellcolor{TealBlue!30}{9.0} &  & 64.8 &  & 9.2 &  & 65.4 &  & \cellcolor{TealBlue!30}{9.0} &  & \cellcolor{TealBlue!30}{50.6} &  & \cellcolor{TealBlue!30}{9.0} &  & 52.2 &  \\
\instance{ B hot } & 4.0 &  & 3.8 &  & 4.0 &  & 4.0 &  & \cellcolor{TealBlue!30}{4.0} & \cellcolor{TealBlue!30}{(5)} & \cellcolor{TealBlue!30}{1.8} &  & 4.0 &  & 2.4 &  \\
\instance{ C cold } & \cellcolor{TealBlue!30}{8.0} &  & 55.4 &  & \cellcolor{TealBlue!30}{8.0} &  & 55.2 &  & \cellcolor{TealBlue!30}{8.0} &  & \cellcolor{TealBlue!30}{49.2} &  & \cellcolor{TealBlue!30}{8.0} &  & 50.0 &  \\
\instance{ C hot } & 3.0 &  & 5.8 &  & 3.0 &  & 5.2 &  & \cellcolor{TealBlue!30}{3.0} & \cellcolor{TealBlue!30}{(5)} & 0.8 &  & \cellcolor{TealBlue!30}{3.0} & \cellcolor{TealBlue!30}{(5)} & \cellcolor{TealBlue!30}{0.4} & \cellcolor{TealBlue!30}{(3)} \\
\hline
\hline
\end{tabular}
\end{scriptsize}
\end{center}
\end{table}

For a deeper analysis of the two bounds proposed in this paper, we ran four other models on the same instances. The first two are the base model augmented with the lower bound on configurations or the lower bound on switches 
(\base $\oplus$ \lba and \base $\oplus$ \bfr, respectively).
The other two models are the full model from which we removed these lower bounds (\full $\setminus$ \bfr and \full $\setminus$ \lba, respectively).
The results of these additional tests, in Table~\ref{tab:factor} clearly show that both bounds are useful. Surprisingly, the capacity to prove optimality on a criterion is also impacted by the bound on the other criterion.

%
%
%
%

\medskip
Finally, in Table~\ref{tab:CM}, we compare the results of our best method (model \full) with the method previously used by Airbus Defense \& Space~\cite{maillet2011constraint}  and with a slight improvement of this method described in \cite{bochesauvan:hal-01166683}. The former, denoted \airbus, is a constraint optimization tool build to solve the packing problem only. 
The second, denoted \airbustsp, is the same approach, however the resulting configurations are then permuted so that the \emph{Traveling Salesman Problem} defined by the Hamming distance between configurations is optimized. It is not surprising that the first method is very poor in terms of total equipment activation since it makes no attempt to optimize this criterion. However, it is interesting to compare with our results as it is still the method used in practice. \rque{est-ce qu'on va avoir le droit de dire cela ?}

The method \mj{\airbustsp}
gives a better solution in one case (\instance{C cold}). 
Indeed, this instance is particular as it involves more than twice as many tests as other instances. The consequence is that the model computing packing and sequencing simultaneously is relatively inefficient. 
\mjo{For this instance,} we therefore ran a simple randomized sequence of the second and third phases of the multi-stage approach (i.e., pure packing followed by pure sequencing) and quickly found a similar solution (though we could not improve on it).
Notice that in the case of the ``hot'' test campaign for satellite \instance{C}, carefully packing and sequencing the tests makes it possible to get rid of all equipment activation besides the mandatory one, whereas 14 activations are necessary with the current method.

\begin{table}[t]
\caption{\label{tab:CM}Comparison with current methods. Number of configurations and switches on industrial instances.}
\tabcolsep=3pt
\begin{center}
	\begin{scriptsize}
\begin{tabular}{l  |rr |rr |rlrl }
\multirow{2}{*}{instance} & \multicolumn{2}{|c}{ \airbus } & \multicolumn{2}{|c}{ \airbustsp } & \multicolumn{4}{|c}{ \algo{Choco} (\full) }  \\
& \multicolumn{1}{|c}{\NCONF} & \multicolumn{1}{c}{\NSWITCH}  & \multicolumn{1}{|c}{\NCONF} & \multicolumn{1}{c}{\NSWITCH}  & \multicolumn{2}{|c}{\NCONF} & \multicolumn{2}{c}{\NSWITCH}  \\
\hline
\hline
\instance{ A cold } 
& 6 & 43 & 6 & 9 
 & {6} &  & \cellcolor{TealBlue!30}{7.0} &  \\
\instance{ A hot }
& 4 & 5 & 4 & 3
& \cellcolor{TealBlue!30}{4} & \cellcolor{TealBlue!30}{(5)} & \cellcolor{TealBlue!30}{2.0} &  \\
\instance{ B cold }
& 10 & 108 & 10 & 70
& \cellcolor{TealBlue!30}{9} &  & \cellcolor{TealBlue!30}{50.2} &  \\
\instance{ B hot } 
& 5 & 30 & 5 & 1
& \cellcolor{TealBlue!30}{4} & \cellcolor{TealBlue!30}{(5)} & \cellcolor{TealBlue!30}{0.8} & \cellcolor{TealBlue!30}{(2)} \\
\instance{ C cold } 
& 8 & 91 & 8 & \cellcolor{TealBlue!30}{41}
& {8} &  & 50.6 &  \\
\instance{ C hot } 
& 3 & 14 & 3 & 9 
& \cellcolor{TealBlue!30}{3} & \cellcolor{TealBlue!30}{(5)} & \cellcolor{TealBlue!30}{0.0} & \cellcolor{TealBlue!30}{(5)} \\
\hline
\hline
\end{tabular}
\end{scriptsize}
\end{center}
\end{table}

\section{Conclusion}
\label{sec:con}

We have introduced a complete constraint programming approach for the problem of packing and sequencing the validation tests of communications satellites. 
We proposed a search strategy and lower bound for the packing aspect of the problem. Moreover, we introduced an improvement of the \switch constraint that can be applied in many other contexts. 
Our experimental evaluation shows that the methods proposed in this paper greatly improve 
the test plans with respect to those 
currently used within the Airbus group.

Although this approach is not yet industrially implemented, a previous \mj{internal study in Airbus}\footnote{Master's internship report by Ludivine Boche-Sauvan for the ``Institut Sup\'erieur de l'Aeronautique et de l'Espace'' (ISAE) in 2012\\ (\url{http://www.laas.fr/files/ROC/LAAS_Techreport.pdf}).}
has shown that, 
\mj{during a test campaign, around 30\% of the total duration is spent on transitions between configurations.} Moreover, in many cases, tests must be interrupted because the payload is overheating and can only resume after the system has been stabilized. The constraint model we introduced should help with both of these issues.

Since such \mj{test campaigns}
require an extremely costly and energy greedy thermal vacuum chamber as well as a large team of engineers in 3-shift rosters, significant financial savings are expected from this approach. 


\bibliographystyle{plain}
\bibliography{biblio}

\end{document}